\documentclass[twoside,11pt]{article}

\usepackage{blindtext}

%

%
%
%

\usepackage{jmlr2e}
\usepackage{caption}
\captionsetup[table]{font=footnotesize}
\captionsetup[figure]{font=footnotesize}
\usepackage{url}
\usepackage{algpseudocode}
\usepackage{algorithm}
\usepackage{adjustbox}
\usepackage{amsmath}
\usepackage{wrapfig}
\usepackage{multirow}
\usepackage{graphicx} 
\usepackage{booktabs}   
\newtheorem{assumption}{Assumption}


\usepackage{lastpage}


\firstpageno{1}

\begin{document}

\title{pMixFed: Efficient Personalized Federated Learning through Adaptive Layer-Wise Mixup}

\author{\name Yasaman Saadati \email Ysaadati@fiu.edu \\
       \addr Security, Optimization, and Learning for InterDependent networks laboratory (solid lab) \\
       Florida International University\\
       Miami, FL 33199, USA
       \AND
       \name Mohammad  Rostami\email mrostami@isi.edu \\
       \addr USC Information Sciences Institute\\
       University of Southern California\\
        Marina Del Rey, CA 90292, USA
       \AND
       \name M. Hadi Amini \email moamini@fiu.edu   \\
       \addr \addr Security, Optimization, and Learning for InterDependent networks laboratory (solid lab)\\
       Florida International University\\
       Miami, FL 33199, USA}

\editor{}

\maketitle


\begin{abstract}
Traditional Federated Learning (FL) methods encounter significant challenges when dealing with heterogeneous data and providing personalized solutions for non-IID scenarios. Personalized Federated Learning (PFL) approaches aim to address these issues by balancing generalization and personalization, often through parameter decoupling or partial models that freeze some neural network layers for personalization while aggregating other layers globally. However, existing methods still face challenges of global-local model discrepancy, client drift, and catastrophic forgetting, which degrade model accuracy. To overcome these limitations, we propose $\textit{\textbf{pMixFed}}$, a dynamic, layer-wise PFL approach that integrates $\textit{mixup}$ between shared global and personalized local models. Our method introduces an adaptive strategy for partitioning between personalized and shared layers, a gradual transition of personalization degree to enhance local client adaptation, improved generalization across clients, and a novel aggregation mechanism to mitigate catastrophic forgetting. Extensive experiments demonstrate that pMixFed outperforms state-of-the-art PFL methods, showing faster model training, increased robustness, and improved handling of data heterogeneity under different heterogeneous settings.
\end{abstract}

\begin{keywords}
  Personalization, Federated Learning, Meta Learning
\end{keywords}    
\section{Introduction}
\label{sec:intro}


\begin{figure}[t]
\centering
\footnotesize
\includegraphics[width= 0.7\linewidth]{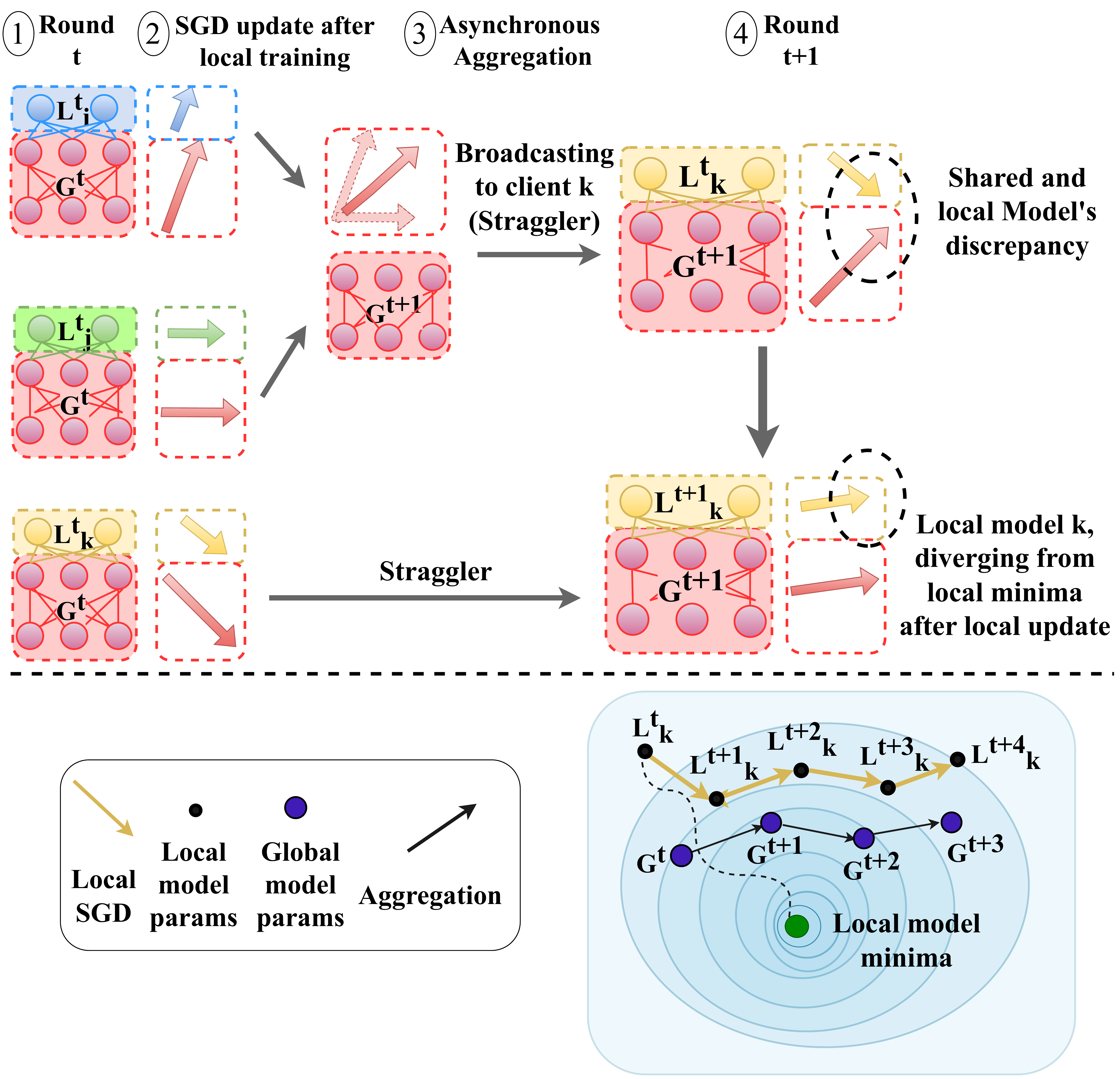}
\caption{Discrepancy between personalized and global shared layers in Partial PFL: \textbf{(1)} The global model, $G^t$, is constructed by aggregating asynchronous local updates from clients, denoted as $L^t_i$, $L^t_j$, and $L^t_k$. \textbf{(2),(3)} In communication round $t$, available clients $i$ and $j$ aggregate shared parameters to produce the updated global model $G^{t+1}$, while the personalized parameters, such as $L^t_k$, remain unchanged for unavailable clients. \textbf{(4)} This integration of distinct models, $G^{t+1}$ and $L^t_k$, induces inconsistencies in the overall model updates. \textbf{(Bottom)} During the joint training of generalized and personalized models, the gradient updates from the generalized layers are impacted by the gradients from personalized layers, resulting in catastrophic forgetting, performance drop and slower convergence rates.}
\label{fig:partial_model}
\end{figure}

AI has achieved remarkable success across  computer vision, natural language processing, and healthcare tasks by leveraging large-scale  datasets. However, in many real-world scenarios data is distributed across multiple devices or organizations, and privacy \cite{ma2020safeguarding,mothukuri2021federated,stansecure,zhu2024enhancing}, regulatory \cite{cai2024esvfl,pati2024privacy,nananukul2024multi,yazdinejad2024robust}, or communication constraints \cite{yao2018two,luping2019cmfl,paragliola2022definition} prevent centralizing the data. This challenge increases   heterogeneity  in data distributions across clients which can   degrade the performance of centralized AI methods. Moreover, user-specific preferences or   contexts often require personalized models rather than a single global solution. These practical necessities motivate the need for collaborative learning methods that allow training models jointly across clients while respecting privacy, reducing communication costs, and adapting to heterogeneous clients.

One goal of federated learning (FL)  \cite{konevcny2016federated} is to facilitate collaborative learning of several machine learning (ML) models in a decentralized scheme. FL requires addressing data privacy,  catastrophic forgetting, and client drift problem \footnote{A phenomenon where the global model fails to serve as an accurate representation because local models gradually drift apart due to high data heterogeneity.
}
 \cite{rostami2018multi,huang2022learn, singhal2021federated, luo2023gradma, qu2022rethinking}.
Existing FL methods cannot address all these challenges with  non-Independent and Identically Distributed (non-IID) data.
 For instance, although Federated Averaging (``FedAvg'') ~\cite{mcmahan2017communication} demonstrates strong generalization performance,  it fails to provide personalized solutions for a cohort of clients with non-IID datasets. Hence, the global model, or one ``average client'' in ``FedAvg'', may not adequately represent all individual local models in non-IID settings due to client-drift \cite{xiao2020averaging}.
Personalized FL (PFL) methods handle data heterogeneity by considering both generalization and personalization during the training stage. Since there is a trade-off between generalization and personalization in heterogeneous environments, PFL methods leverage heterogeneity and diversity as 
advantages rather than adversities \cite{pye2021personalised, tan2022towards}. A group of PFL approaches train personalized local models on each device while collaborating toward a shared global model. Partial PFL, also known as parameter decoupling, involves using a partial model sharing, where only a subset of the model is shared while other parameters remain ``frozen'' to balance generalization and personalization until the subsequent round of local training. 

While partial PFL methods are effective in mitigating catastrophic forgetting, strengthening privacy, and reducing computation and communication overhead  \cite{pillutla2022federated, sun2023fedperfix}, there are still some unaddressed challenges. First, the question of \textit{when, where, and how to optimally partition the full model?} remains unresolved. Recent studies \cite{pillutla2022federated, sun2023fedperfix} have shown that there is no ``one-size-fits-all'' solution; the  best or optimal partitioning strategy depends on factors such as task type (e.g., next word prediction or speech recognition) and local model architecture. An improper partitioning choice can lead to issues such as underfitting, overfitting, increased bias, and catastrophic forgetting. Some studies \cite{liang2020think} suggest that personalized layers should reside in the base layers, while others \cite{collins2021exploiting,     arivazhagan2019federated} argue that the base layers contain more generalized information and should be shared. 
Further, the use of a fixed partitioning strategy across all communication rounds for heterogeneous clients can limit the efficacy of collaborative learning. For instance, if the performance of the client suddenly drops due to new incoming data, the partitioning strategy should be changed because the client requires more frozen layers. 
Another issue is catastrophic forgetting of the previously shared global knowledge after only a few rounds of local training because the shared global model can be completely overwritten by local updates leading to generalization degradation \cite{luo2023gradma, shirvani2023comprehensive, huang2022learn, xu2022acceleration}.  
Most importantly, partial models may experience slower convergence compared to full model personalization, as frozen local model updates can diverge in an opposite direction from the globally-shared model. Since the generalized and personalized models are trained on non-IID datasets, there might also be a domain shift, leading to model discrepancy as depicted in Figure \ref{fig:partial_model}. These discrepancies arise from variations in local and global objective functions, differences in initialization, and asynchronous updates \cite{yang2024fedas,lee2023partial}. As a result, merging the shared and the personalized layers can disrupt information flow within the network, impede the learning process, and lead to a slower convergence rate or accuracy drop in partial PFL models such as FedAlt and FedSim \cite{pillutla2022federated}\footnote{More details on this is discussed in section \ref{sec:resiliance}.}. 
Further, while partial PFL techniques contribute to an overall improved training accuracy, they can reduce the test accuracy on some devices, particularly in devices with limited samples, leading to variations in results in terms of the performance level \cite{pillutla2022federated}. Hence, there is a need for novel solutions to achieve the following goals in PFL:
\begin{itemize}
\item {Dynamic and Adaptive Partitioning:} The balance between shared and personalized layers should be dynamically and adaptively adjusted for each client during every communication round, rather than relying on a static, fixed partitioning strategy for all participants. 
\item { Gradual Personalization Transition}: The degree of personalization should transition gradually across layers, as opposed to an ``all-or-nothing'' approach that employs strict partitioning or hard splits within the model discussed in Figure \ref{fig:partial_model}. This ability allows nuanced adaptation for individual client needs due to heterogeneity.  
\item { Improved Generalization Across All Clients}: The average personalization accuracy should be such that the global model is unbiased toward specific subsets of clients. 
\item { Mitigation of Catastrophic Forgetting:} The strategy should address the catastrophic
forgetting problem by incorporating mechanisms to strengthen the generalization and retain the state of the previous global model when updating the global model in aggregation.
\item {  Scalability and Adaptability:} The approach should be fast, scalable, and easily adaptable to new cold-start clients while   accounting for model/device heterogeneity. 
\end{itemize}

\noindent To achieve the above, we propose ``pMixFed'', a layer-wise, dynamic PFL approach that integrates Mixup   \cite{zhang2017mixup} between the shared global and personalized local models' layers during both the broadcasting (global model sharing with local clients) and aggregation (aggregating distributed local models to update the global model) stages within a partial PFL framework. 
 Our  main  contributions include: 

\begin{itemize}
    \item We develop an online, dynamic interpolation method between local and global models using Mixup~\cite{yoon2021fedmix}, effectively addressing data heterogeneity and scalable across varying cohort sizes, degrees of data heterogeneity, and diverse model sizes and architectures.
    \item Our solutions facilitate a gradual increase in the degree of personalization across layers, rather than relying on a strict cut-off layer,  helping to mitigate   client drift. 
    \item We introduce a new fast and efficient aggregation technique which  addresses catastrophic forgetting by keeping the previous global model state.
    \item ``pMixFed'' reduces the participant gap (test accuracy for cold-start users) and the out-of-sample gap (test accuracy on unseen data) caused by data heterogeneity through linear interpolation between client updates, thereby mitigating the impact of client drift.

\end{itemize}

The remainder of this paper is organized as follows. In Section~\ref{sec:LR}, we review the relevant literature on personalized federated learning, highlighting existing methodologies and their limitations. Section~\ref{sec:problem_formulation} introduces the problem formulation, including both fully personalized and partially personalized federated learning settings, accompanied by theoretical analysis. In Section~\ref{sec:methodology}, we present the proposed \textit{pMixFed} framework, detailing the integration of adaptive and dynamic mixup strategies across different model layers. Section~\ref{sec:experiments} provides extensive empirical evaluations of \textit{pMixFed}, including comparisons with state-of-the-art baselines and analyses of computational and communication efficiency. Finally, Section~\ref{sec:conclusion} concludes the paper and outlines potential directions for future research.

\section{Related Work}\label{sec:LR}
PFL seeks to adapt each client’s model to its individual data, preferences, and context. 
Similar to classic domain adaptation, data privacy can be an important concern in PFL
\cite{stan2024unsupervised,verma2025leveraging,bingtao2025prodg,tian2024novel}, the additional challenge, however, is that data is distributed and we need to adapt several models due to data  heterogeneity between the clients.
Since the advent of FL, a wide range of PFL methods have been proposed to address client heterogeneity (both statistical and system), which we group into four categories: (i) data-centric strategies; (ii) clustering-based approaches; (iii) global-model adaptation; and (iv) local-model personalization—the focus of this work, which also encompasses partial-model (parameter-decoupling) techniques. We discuss each category in turn below.

\subsection{Data-Centric}
Data-centric methods in PFL seek to mitigate distribution shift and class imbalance by shaping the data seen during training. for example, by adjusting sample sizes, label distributions, and client selection strategies \cite{tan2022towards}. Common techniques include data normalization, feature engineering, augmentation, synthetic data generation, and adaptive client sampling. Representative examples include Astraea \cite{duan2019astraea}, which performs reputation-based client selection by evaluating neighbors' updates and rewarding trustworthy contributors; P2P k\text{-}SMOTE \cite{wang2021non}, where clients synthesize minority-class examples via $k$-nearest-neighbor interpolation to reduce local imbalance; and FedMCCS \cite{9212434}, which improves fairness by selecting participants based on multiple factors (e.g., dataset size, class diversity, and training loss) rather than at random. FedAug \cite{jeong2018communication, zhao2018federated} employs server-side data augmentation to alleviate heterogeneity, while FedHome \cite{wu2020fedhome} augments local data using autoencoders trained with server-provided synthetic samples to address scarcity and imbalance. More recent works directly target class imbalance in PFL: Lee et al. \cite{lee2024regularizing} construct a separate global model per class and let clients personalize by composing these class-wise globals according to local distributions; Liang et al. \cite{liang2024fedrema} (FedReMa) selectively aggregates updates from the most relevant clients to enhance personalization under skewed data; and FedGA \cite{xiao2024fedga} aligns client gradients before aggregation to reduce imbalance-induced bias and improve minority-class performance. \\
Despite their promise, data-centric approaches alter the underlying statistics of federated data and may inadvertently introduce bias or erase rare but important patterns. Several methods also rely on proxy or auxiliary server-side data, raising privacy and distribution-mismatch concerns. Finally, the added computation (and sometimes communication) can be burdensome for resource-constrained devices, limiting the practicality of these strategies in real-world PFL deployments.

\subsection{Clustering}
Clustering-based methods in PFL address the core challenge of heterogeneity: each client may hold data from markedly different distributions (e.g., distinct users, sensors, or hospitals). Naïvely averaging updates (e.g., FedAvg) can induce negative transfer when distributions diverge. Instead of a single global model, these approaches exploit structure across clients so that those with similar distributions reinforce one another while avoiding distortion from dissimilar updates. Clients thus retain the benefits of collaboration while mitigating negative transfer; in large populations, clustering/graph constructions also reduce complexity relative to fully personalized models. Clustering methods can be grouped into three categories: (1) \emph{Dynamic Clustering and Graph-based Aggregation}; (2) \emph{Per-Client Weighting}; and (3) \emph{Personalized Clustering}. \\
In the first category, methods dynamically group clients or build graphs to personalize aggregation based on inter-client relationships. For example, \textsc{FEDCEDAR} \cite{wang2024rethinking} builds dynamic weighted graphs among clients, enabling precise, personalized model distribution via dynamic graph propagation. Multi-Center FL \cite{long2023multi} learns multiple global model “centers,” assigning each client to its best-fitting center through an EM-style procedure, thereby improving personalization under heterogeneous distributions. More recently, \cite{song2025chpfl} proposes a three-layer clustered hierarchical PFL framework that adapts clustering at multiple levels to handle severe non-IID data. \\
In the second category (Per-Client Weighting), instead of uniform averaging, methods assign personalized aggregation weights according to client similarity or utility. FedSPD \cite{lin2024fedspd} uses soft clustering so that clients converge toward cluster-relevant models while preserving personalization through flexible model consensus. FedDWA \cite{liu2023feddwa} employs a parameter server to compute dynamic, per-client aggregation weights from model-update similarity, reducing communication and privacy overhead. \\
In the third category (Personalized Clustering), personalization is organized around classifier outputs or by merging multiple personalized models. For instance, pFedCk \cite{zhang2024personalized} combines clustering with knowledge distillation: each client keeps a personalized model locally while an interaction model is shared on the server; features and logits are distilled across similar clients (clustered by the server) to enhance personalization. Likewise, COMET \cite{cho2023communication} clusters clients based on distribution similarity and enables knowledge transfer through logits-based co-distillation, yielding cluster-specific personalized models while maintaining model heterogeneity and communication efficiency. \\
Clustering can be computationally heavy, and assignments may be unstable—clients can be misassigned when data are noisy or highly imbalanced. Maintaining multiple cluster models becomes costly at scale. Finally, similarity measures or EM responsibilities may leak distribution information, especially in centroid-based or hierarchical clustering.

\subsection{Global Model Adaptation:}
In these approaches, a single global model is maintained on the server and subsequently adapted to individual local models in a later phase. The primary goals are: (1) learning a robust, generalized model and (2) enabling fast, efficient local adaptation. Several techniques employ regularization terms to mitigate client drift and prevent model divergence during local updates. For example, FedProx \cite{li2020federated} adds a proximal term so local updates stay close to the current global; SCAFFOLD \cite{karimireddy2020scaffold} uses control variates to cancel client drift while training a single server model; MOON \cite{li2021model} introduces a contrastive term to keep local representations aligned with the global and/or the previous local model; and FedCurv \cite{shoham2019overcoming} infuses an EWC-style curvature penalty into the objective to keep local steps near a shared optimum. \\
Another line of single-global PFL approaches leverages meta-learning and transfer learning. Methods such as PerFedAvg \cite{fallah2020personalized} learn a global initialization that each client can adapt with a few steps (MAML in FL). In FedL2P \cite{lee2023fedl2p} meta-nets learn client-specific personalization hyperparameters (e.g., layerwise LRs / BN) to fine-tune the same global efficiently. MetaVers \cite{lim2024metavers} meta-learns a versatile representation so the shared model adapts rapidly at clients. FedSteg \cite{yang2020fedsteg} uses federated transfer/knowledge-transfer ideas for image steganalysis, and DPFed \cite{yu2020salvaging} explicitly studies fine-tuning/local adaptation of a federated global model. \\
Global model adaptation introduces specific challenges. Meta-learning methods can be computationally intensive, while regularization-based techniques add overhead by incorporating extra terms into the objective. Similarly, transfer-learning approaches can be communication-inefficient and often require a public dataset to enhance the server-side global model. A common limitation across most adaptation techniques is the need for a uniform model architecture across all clients, forcing devices with varying computational capabilities to use the same model size.\\ Another relevant direction is zero-shot learning (ZSL), which aims to train a model that generalizes on unseen classes, tasks, or domains without having direct training data for them \cite{hao2021towards,zhang2022fedzkt,rostami2022zero,asif2024advanced,guo2025feze}. In the context of federated learning, ZSL can be interpreted as enabling the global or personalized models to perform well for new clients or unseen data distributions without requiring additional local training rounds. Recent works have leveraged  generative models~\cite{asif2024advanced}   to bridge the gap between seen and unseen tasks. Integrating ZSL capability into PFL could improve support for cold-start clients by allowing them to benefit from global knowledge without significant local updates.


\subsection{Local Model Personalization:}
The limitations of existing PFL methods have led to approaches that train customized models for each client. One line of work utilizes multi-task learning (MTL), a collaborative framework that facilitates information exchange across distinct tasks. for example, MOCHA \cite{smith2017federated}, a canonical federated MTL method that trains related but distinct client models with a shared regularizer; and FedAMP \cite{huang2021personalized}, where attentive message passing builds client-specific models with pairwise collaboration. In Ditto \cite{li2021ditto}, each client optimizes its own personalized model with a regularization term. FEDHCA2 \cite{lu2024fedhca2} learns relationships between heterogeneous clients to produce personalized but collaborative models. In FedRes \cite{agarwal2020federated}, clients learn local residuals that add to a shared global model; the personalized local predictor (global + local) can be viewed through the lens of MTL or parameter decoupling which is fully discussed below.
Another category leverages knowledge distillation (KD) to support personalization when client-specific training objectives differ. In FedMD \cite{li2019fedmd}, clients keep heterogeneous local models and distill via public logits to learn personalized local models. In FedGen \cite{zhu2021data}, a server-side generator helps clients personalize via KD and feature transfer. FedGKT \cite{he2020group} provides group teacher–student training and supports client-side students to form personalized models. These KD-based methods often include an adaptive fusion step that balances generalization and personalization without requiring full model sharing. Some approaches also combine KD with other categories; for example, pFedCK \cite{zhang2024personalized} combines client clustering and KD to produce personalized client models, and FedGMKD \cite{zhang2024fedgmkd} uses prototype-based KD with client-specific distillation to target local personalization.\\
A third, fast-growing line is hypernetwork or weight-generation–based personalization. Although the number of works is still relatively small, the space is expanding with adapters, prompts, and personalized parameter generation. The first paper, pFedHN \cite{shamsian2021personalized}, trains a server-side hypernetwork that generates client-specific parameters from client embeddings; each client then receives its own weights, and personalization requires minimal client training. HyperFLoRA \cite{lu2024hyperflora} trains a hypernetwork that generates LoRA adapter modules personalized to each client for LLM personalization: clients send lightweight statistics, and the server generates per-client adapters. \\
Both MTL- and KD-based approaches can incur significant computational and communication overhead, limiting scalability for large-scale FL deployments on resource-constrained devices. Many KD-based PFL methods assume access to proxy/public datasets for logit matching or generator training, which is unrealistic in privacy-sensitive settings. Hypernetworks, on the other hand, face a capacity bottleneck: a single hypernetwork must generate useful weights for many diverse clients; under high heterogeneity, it may underfit or collapse toward “average” solutions. \\

A prominent subcategory of local-model personalization, which decouples a shared parameter set (e.g., backbone) from client-specific modules (e.g., heads/adapters), is called \textbf{partial models}. This design is widely adopted because it is parameter-efficient, easy to quantize or deploy, and operationally transparent. By keeping the shared component stable and personalizing only a small subset, these methods reduce negative transfer across clients and mitigate catastrophic forgetting of global knowledge while enabling fast on-device adaptation. FedPer \cite{arivazhagan2019federated} introduced partial models in FL, sharing only initial layers with generalized information and reserving final layers for personalization. FedBABU \cite{oh2021fedbabu} divides the network into a shared body and a frozen head with fully connected layers. Other frameworks, such as FURL \cite{bui2019federated} and LG-FedAvg \cite{liang2020think}, apply partial PFL by retaining private feature embeddings or using compact representation learning for high-level features, respectively. Two baseline methods in this paper are FedAlt \cite{singhal2021federated} and FedSim \cite{pillutla2022federated}. FedAlt uses a stateless FL paradigm, reconstructing local models from the global model, while FedSim synchronously updates shared and local models with each iteration. FedSelect \cite{tamirisa2024fedselect} is a Lottery-ticket-inspired subnetwork selection method that has been published recently. This method gradually personalizes a subnetwork per client while aggregating the rest. FedTSDP \cite{zhu2024federated} is a PFL method that combines partial-model with clustering to better handle non-IID data. FDLoRA \cite{qi2024fdlora}, which is an LLM-based FL systems, use LoRA (Low-Rank Adaptation) technique to improve personalization. FDLoRA introduces dual LoRA modules per client, one capturing global knowledge aggregated by the server, and the other capturing client-specific personalization. An adaptive fusion process reconciles the two, effectively balancing generalization with personalization without requiring full model sharing. \\ 
These methods face challenges such as model update discrepancies (as shown in Figure \ref{fig:partial_model}) and catastrophic forgetting, where shared layers may undergo significant changes after only a few rounds of local training, resulting in sudden accuracy drops especially in high-scale Fl setting. Additionally, users with high personalization accuracy may freeze more layers than cold-start users or unreliable participants, who should rely more heavily on the global model. These challenges have motivated our development of a dynamic, adaptive layer-wise approach to balance generalization and personalization across clients, allowing tuning at different communication rounds to accommodate varying performance conditions.

\section{Problem Formulation} \label{sec:problem_formulation}

Consider $K$ collaborating clients (or agents), each trying to optimize a local loss function $F_k(\theta)$  on the  distributed local dataset ${D_k}={(x_k , y_k)} $, where $ (x,y)$ shows the data features and   the corresponding labels, respectively. Since the agents collaborate, the  parameters $\theta$ (parameters of the  global model) are shared across the agents. A basic FL objective function   aims to optimize the overall global loss:
\begin{equation}
\label{FL_formula1}
\min_{\theta} F(\theta) = \sum_{k=1}^{K} \frac{|\mathcal{D}_k|}{|\mathcal{D}|} F_k(\theta)  \quad \quad , 
\end{equation}
\[F_k(\theta) = \frac{1}{|\mathcal{D}_k|} \sum_{(x,y_i) \in \mathcal{D}_k} \ell(f_k(\theta ,x_i), y_i) \]
where \( |\mathcal{D}| = \sum_{k=1}^{K} |\mathcal{D}_k| \) and $F(\theta)$ is the global loss function of \textit{FedAvg} \cite{mcmahan2017communication}. 
 FL is performed in at iterative fashion. At each round, each client downloads the current version of the global model and trains it using their local data. Clients then send the updated model parameters to the central server.  The central server uses the model updates from the selected clients and aggregates them to update the global model. Iterations continue until convergence. \\
 In Equation \ref{FL_formula1}, the assumption is that the data is collected from an IID distribution, and all clients should train their data according to the exact same model. However, this assumption cannot be applied within many practical FL settings due to the non-IID nature of data and resource limitations \cite{imteaj2021survey,imteaj2021fedparl}. In the PFL settings with high heterogeneity and non-IID  data distribution, the same issue persists and the local parameters need to be customized toward each agent.
 PFL extends FL by solving the following   \cite{li2021model,pillutla2022federated}: 
\begin{equation}
\label{PFL_formula1}
\min_{\theta,\theta_k} \sum_{k=1}^{K} \frac{1}{|\mathcal{D}_k|}(\mathcal{F}_k(\theta_k) + \alpha_k \|\mathbf{\theta_k - \theta}\|^2 ).
\end{equation}
  PFL explicitly handles data heterogeneity through the term \(\mathcal{F}_k(\theta_k)\) which accounts for model heterogeneity by considering personalized parameter \(\theta_k\) for client \(k\). Meanwhile, \(\theta\) represents the shared global model parameters in Equation \ref{PFL_formula1}, and \(\alpha_k\) acts as a regularizer and indicates the degree of personalization tuning collaborative learning between personalized local models \(\theta_k\) and generalized global model \(\theta\). When \(\alpha_k\)	is small, the personalization power of the local models will be increased. If \(\alpha_k\) is large, the local models' parameters tend to be closer to the global   parameters \(\theta\).

\subsection{Partial Personalized Model}
The limitations of  full model personalization methods with global and fully independent local models are discussed in Section \ref{sec:LR}. Partial PFL methods improve personalization by providing more flexibility through allowing clients to choose which parts of their models to be personalized based on their specific needs and constraints for improved performance. Let $L_k^t$ be a partial local model $k$ in round $t$ which is partitioned into two   parts $\langle L_{l,k}^t ; L_{g,k}^t \rangle $, where $l,g \subseteq \{1, \dots M\}$ are the personalized and global layers, and $M$ is the number of layers. We can   integrate both personalized and generalized layers in a local model $L_k$
as:
\begin{equation}
\label{PFL_partial_formula}
\mathcal{F}_k(\theta_k) = \ell(f_k(\langle L_{g,k} ; L_{l,k} \rangle , x_k), y_k)
\end{equation}

Among different partitioning strategies for partial PFL \cite{pillutla2022federated},  the most popular  technique is to assign local personalized layers $L_{l,k}^t$ to  final layers and allow the base layers $L_{g,k}^t$ to share the knowledge similar to \textit{FedPer} \cite{arivazhagan2019federated}. This choice aligns with insights from MAML \footnote{Model-Agnostic Meta-Learning} algorithm, suggesting that initial layers keeps general  and broad information while personalized   characteristics manifest prominently in the higher layers. Accordingly, we would have:
\[
\label{PFL_cut_layer_formula}
\mathcal{F}_k(\theta_k) = L_l^{(t)}(L_g^{(t)}(x_k)) \xrightarrow{local update} L'_l(L'_g(x_k))  \quad , 
\]
\[ 
L'_l(L'_g(x_k)) \xrightarrow{broadcasting} L_l^{(t+1)}(G^{(t+1)}(x_k)) 
\]
For simplicity $L_g = L_{g,k}$ and $L_l = L_{g,k}$ where $\{1 \leq g \le s \leq l \leq M \}$ and  $s$ is the split(cut) layer. The objective in solving Equation \ref{PFL_cut_layer_formula} is to find the optimal $s$ (cut layer) which minimizes the personalization objective: $\sum_{k=1}^{K} \frac{1}{|\mathcal{D}_k|} \mathcal{F}_k(\theta_k) = L_l^{(t)}(L_g^{(t)}(x_k))$. 
In partial models, after several rounds of local training, both personalized and global layers of local model are updated. This update could be synchronous like \textit{FedSim} or asynchronous as in \textit{FedAlt}. The Personalized layers will be frozen until the next communication round, $L_l^{(t+1)} = L'_l$ and the global layers will be sent   to the server for global model aggregation :  $G^{(t+1)} \leftarrow \sum_{k=1}^{K} \frac{|\mathcal{D}_k|}{|\mathcal{D}|} L'_g(x_k)$. In the next broadcasting phase, the shared layers of the local model will be updated  as $L_g^{(t+1)} \leftarrow G^{(t+1)}$. 

\begin{table}[htbp]
\caption{Notation used throughout the paper. Vectors are column vectors; $\|\cdot\|$ is the Euclidean norm.}
\label{tab:notation}
\centering
\small
\begin{tabular}{llp{0.62\linewidth}}
\hline
\textbf{Symbol} & \textbf{Type} & \textbf{Meaning} \\
\hline
$K$ & scalar & Number of clients \\
$\mathcal{U}_t$ & set & Clients selected at round $t$; $|\mathcal{U}_t|{=}K$ \\
$D_k$ & dataset & Local dataset of client $k$; size $|D_k|$ \\
$|D|$ & scalar & Total data size, $|D|{=}\sum_k |D_k|$ \\
$\Omega_k$ & scalar & Aggregation weight $|D_k|/|D|$ \\
$\ell(\cdot,\cdot)$ & function & Pointwise loss used to build local/global objectives \\
$f_k(\theta,x)$ & function & Model prediction on client $k$ with parameters $\theta$ \\
$F_k(\theta)$ & function & Local objective on client $k$ \\
$F(\theta)$ & function & Global objective $\sum_k \Omega_k F_k(\theta)$ \\
\hline
$\theta$ & vector & Global (server) model parameters \\
$\theta_k$ & vector & Client $k$'s local/personal parameters (if stored) \\
$G^{(t)}$ & vector & Server/global parameters at round $t$ (alias of $\theta^{t}$) \\
$L_k^{(t)}$ & vector & Local model of client $k$ at round $t$ \\
$L^{(t)}_{g,k}$ & vector & Global/shared layers of $L_k^{(t)}$ \\
$L^{(t)}_{l,k}$ & vector & Local/personalized layers of $L_k^{(t)}$ \\
$M$ & scalar & Number of layers in the network \\
$s$ & index & Split (cut) layer index, $1\le s\le M$ \\
\hline
$t$ & index & Communication round index ($t{=}0,\dots,T{-}1$) \\
$T$ & scalar & Total number of communication rounds \\
$r,\ \tau$ & scalar & Number of local steps per round (local iterations/epochs) \\
$b$ & scalar & Local mini-batch size \\
$\eta_\ell$ & scalar & Local learning rate (client step size) \\
$\eta_g$ & scalar & Server/aggregation step size in the FedSGD view \\
$\lambda^{t}_{k,i}$ & scalar & Mixup coefficient for client $k$, layer $i$, round $t$ ($\in[0,1]$) \\
$\lambda^{t}_{k}$ & scalar & Client-level mixup (uniform across layers if assumed) \\
$\bar{\lambda}^{\,t}$ & scalar & Population-averaged mixup, $\sum_{k\in\mathcal{U}_t}\Omega_k \lambda_k^{t}$ \\
$\eta_{\mathrm{eff}}^{\,t}$ & scalar & Effective step size $(1{-}\bar{\lambda}^{\,t})\,\eta_\ell$ in SGD view \\
$\mu$ & scalar & Mix factor controlling the schedule of $\lambda$ across layers/rounds\\
\hline
$\nabla F_k(\theta)$ & vector & Full local gradient on client $k$ \\
$g_k^{t}$ & vector & Stochastic gradient on client $k$ at round $t$ \\
$g_{k,s}^{t}$ & vector & Stochastic gradient at local step $s\in\{0,\dots,\tau{-}1\}$ in round $t$ \\
$L$ & scalar & Smoothness constant (Lipschitz gradient) \\
$\mu_{\text{sc}}$ & scalar & Strong convexity constant (when assumed in Sec.~4) \\
$\kappa$ & scalar & Condition number $L/\mu_{\text{sc}}$ (strongly convex case) \\
$\sigma^2$ & scalar & Upper bound on gradient variance \\
$\zeta_k$ & scalar & Gradient dissimilarity: $\sup_\theta \|\nabla F(\theta){-}\nabla F_k(\theta)\|^2$ \\
$\zeta$ & scalar & Aggregate dissimilarity (e.g., $\sum_k \zeta_k$ or mean variant) \\
$\Delta_k$ & scalar & Local–global optimum gap $\|\theta_k^\star {-} \theta^\star\|^2$ \\
$\mathcal{G}$ & scalar & Gradient moment bound used in multi-step analysis (Sec.~4) \\
\hline
$\mathrm{Beta}(\alpha_{\mathrm{B}},\alpha_{\mathrm{B}})$ & dist. & Beta distribution used to sample $\lambda$ in MixUp \\
 \\
\hline
\end{tabular}
\vspace{2mm}

\footnotesize \textbf{Notes.} (i) Aggregation-as-FedSGD yields $\boxed{\bar\lambda^{\,t}=1-\eta_g/\eta_\ell}$ and $\eta_{\mathrm{eff}}^{\,t}=(1{-}\bar{\lambda}^{\,t})\eta_\ell$. 
(ii) We reserve $\mu_{\text{sc}}$ for strong convexity; $\mu$ denotes the mixup scheduling factor if present in Methodology.
\end{table}

\section{Theoretical Analysis}
\label{sec:theory}

In this section we establish a clean theoretical view of \textit{pMixFed}.  We show that the update rule is equivalent to stochastic gradient descent (SGD) on the global objective with an \emph{effective step size} governed by the mixup coefficients. This both clarifies stability conditions and explains how pMixFed mitigates catastrophic forgetting. \\
In \textit{FedSGD}, the gradients are aggregated and the server will be update the global model according to the aggregated gradients. \textit{FedSGD} is sometimes preferred over \textit{FedAvg} due to its potentially faster convergence. However, it lacks robustness in heterogeneous environments. \textit{pMixFed} leverages the faster convergence characteristics of \textit{FedSGD} by incorporating early stopping mechanisms, facilitated by the use of mixup. As It'll be demonstrated in Section \ref{sec:lambda=lr}, the mixup factor $\lambda$ functions analogously to an SGD update at the server, even though \textit{pMixFed} aggregates model weights rather than gradients, similar to \textit{FedAvg}. Section \ref{sec:sgd-view} provides a more detailed explanation of this mechanism. 

\subsection{Assumptions}
Let $F(\theta):=\sum_{k=1}^K \Omega_k F_k(\theta)$ denote the global objective, where $\Omega_k=|D_k|/|D|$ and $\theta^t$ is the global model at round $t$. Each client $k$ runs $r$ local SGD steps with step size $\eta_\ell$, producing $\theta_k^{t+1}$. Mixup coefficients $\lambda_{k,i}^t \in [0,1]$ apply at layer $i$ of client $k$. For simplicity we define client-averaged $\lambda_k^t:=\sum_i w_i \lambda_{k,i}^t$ with $\sum_i w_i=1$, and population average $\bar\lambda^t:=\sum_{k} \Omega_k \lambda_k^t$. We make the following standard assumptions. These assumptions are standard in convergence analysis and ensure that the optimization process is well-behaved.

\begin{assumption} \label{assump:grad_variance}
    \textbf{(Unbiased Gradient Estimation with Bounded Variance.)} The gradient estimate for local model updates is unbiased and has a bounded variance, i.e.,)\\
  \begin{equation}
\mathbb{E}[g_k^t] = \nabla F_k(\theta^t)
\quad \text{and} \quad
\mathbb{E}\|g_k^t - \nabla F_k(\theta^t)\|^2 \le \sigma^2 .
\end{equation}

\end{assumption}

\begin{assumption}[L-Smoothness]
\label{as:smoothness}
Each $F_k$ has $L$-Lipschitz gradients, i.e.,
\begin{equation}
\|\nabla F_k(x) - \nabla F_k(y)\| \le L \|x - y\|
\end{equation}

\end{assumption}

\begin{assumption}[Bounded Gradients]
\label{as:bounded_gradients}
The gradients at each client are bounded, i.e.,
\begin{equation}
\mathbb{E}\|g_k^t\|^2 \le G^2
\end{equation}

\end{assumption}

\subsection{Equivalence to FedSGD}
\label{sec:sgd-view}
In \textit{pMixFed}, the server forms a convex combination between the previous global iterate and the locally updated models:
\begin{equation}
\label{eq:pmix-agg}
\theta^{t+1}
~=~ \sum_{k\in\mathcal{U}_t} \Omega_k\left(\lambda_k^t\,\theta^t + (1-\lambda_k^t)\,\theta_k^{t+1}\right)
~=~ \bar\lambda^{\,t}\,\theta^t + (1-\bar\lambda^{\,t})\sum_{k\in\mathcal{U}_t} \Omega_k \theta_k^{t+1}.
\end{equation}
With one local step from $\theta^t$,
\(
\theta_k^{t+1}=\theta^t - \eta_\ell\,g_k^t
\),
(\ref{eq:pmix-agg}) becomes
\begin{equation}
\label{eq:sgd-view}
\theta^{t+1}
~=~
\theta^t - \underbrace{(1-\bar\lambda^{\,t})\,\eta_\ell}_{\displaystyle \eta_{\mathrm{eff}}^{\,t}}
\sum_{k\in\mathcal{U}_t} \Omega_k\, g_k^t.
\end{equation}
Thus \textit{pMixFed} is \emph{exactly} centralized SGD on $F$ with \emph{effective} step size
\begin{equation*}
\boxed{\qquad \eta_{\mathrm{eff}}^{\,t} ~=~ (1-\bar\lambda^{\,t})\,\eta_\ell. \qquad}
\end{equation*}

\paragraph{Matching the FedSGD view.}
If the server instead writes a FedSGD step with global step size $\eta_g>0$, i.e.,
$\theta^{t+1}=\theta^t-\eta_g \sum_k \Omega_k g_k^t$, then (\ref{eq:sgd-view}) shows the coefficients must satisfy:

\begin{theorem}[Coefficient matching with FedSGD]
\label{thm:match}
For any round $t$, the \textit{pMixFed} update (\ref{eq:pmix-agg}) similar with a FedSGD step of size $\eta_g$ if and only if
\[
\boxed{\qquad \bar\lambda^{\,t} ~=~ 1 - \frac{\eta_g}{\eta_\ell}\,, \qquad 0 \le \frac{\eta_g}{\eta_\ell} \le 1. \qquad}
\]
\end{theorem}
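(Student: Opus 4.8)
The plan is to leverage the identity already established in (\ref{eq:sgd-view}), which rewrites the \textit{pMixFed} aggregation as a single centralized gradient step with effective step size $\eta_{\mathrm{eff}}^{\,t}=(1-\bar\lambda^{\,t})\,\eta_\ell$. Under the single-local-step regime $\theta_k^{t+1}=\theta^t-\eta_\ell g_k^t$ used there, both the \textit{pMixFed} update and the hypothesized FedSGD update are affine in the \emph{same} aggregated stochastic gradient $\bar g^{\,t}:=\sum_{k\in\mathcal{U}_t}\Omega_k g_k^t$, differing only by the scalar multiplier in front of $\bar g^{\,t}$. Matching the two updates therefore reduces to a single scalar equation, which I can solve directly for $\bar\lambda^{\,t}$.

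Concretely, I would equate (\ref{eq:sgd-view}) with the FedSGD step $\theta^{t+1}=\theta^t-\eta_g\,\bar g^{\,t}$, cancel the common $\theta^t$, and compare the coefficients of $\bar g^{\,t}$. This gives $(1-\bar\lambda^{\,t})\,\eta_\ell=\eta_g$; dividing by $\eta_\ell>0$ and rearranging yields $\bar\lambda^{\,t}=1-\eta_g/\eta_\ell$, the boxed identity. The sufficiency (``if'') direction is then immediate: substituting this value back into $\eta_{\mathrm{eff}}^{\,t}$ recovers exactly $\eta_g$, so the two updates coincide.

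The step I expect to require the most care is the necessity (``only if'') direction, because equality of two vector-valued updates does not, in general, force equality of the scalar coefficients in front of them: if $\bar g^{\,t}=0$, then every step size produces the identical update $\theta^{t+1}=\theta^t$ and the coefficient is underdetermined. I would therefore state the equivalence under the mild genericity condition $\bar g^{\,t}\neq 0$ (equivalently, interpret ``matching'' as producing the same update across all admissible gradient realizations, which rules out the degenerate case). Given $\bar g^{\,t}\neq 0$, equality of $-\eta_{\mathrm{eff}}^{\,t}\,\bar g^{\,t}$ and $-\eta_g\,\bar g^{\,t}$ forces $\eta_{\mathrm{eff}}^{\,t}=\eta_g$ by cancelling the nonzero vector, which closes the equivalence.

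Finally, I would justify the admissibility range $0\le \eta_g/\eta_\ell\le 1$ from the feasibility of the mixup coefficients. Since each layer coefficient satisfies $\lambda_{k,i}^t\in[0,1]$, the client-level average $\lambda_k^t=\sum_i w_i\lambda_{k,i}^t$ is a convex combination ($\sum_i w_i=1$) and hence lies in $[0,1]$; averaging again over clients with the simplex weights $\Omega_k$ gives $\bar\lambda^{\,t}=\sum_k\Omega_k\lambda_k^t\in[0,1]$. Imposing $0\le\bar\lambda^{\,t}\le 1$ on the identity $\bar\lambda^{\,t}=1-\eta_g/\eta_\ell$ translates directly into $0\le\eta_g/\eta_\ell\le 1$, i.e., the matched FedSGD step cannot exceed the local step size. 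This also yields the intended reading: heavier mixup ($\bar\lambda^{\,t}\to 1$) shrinks the effective server step toward zero, whereas $\bar\lambda^{\,t}\to 0$ recovers the full local step $\eta_\ell$.
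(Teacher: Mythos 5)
Your proposal is correct and follows essentially the same route as the paper: equate the SGD-view update $\theta^{t+1}=\theta^t-(1-\bar\lambda^{\,t})\eta_\ell\sum_k\Omega_k g_k^t$ with the FedSGD step and match the scalar coefficient on the aggregated gradient, then read off the range $0\le\eta_g/\eta_\ell\le 1$ from $\bar\lambda^{\,t}\in[0,1]$. Your extra care on the necessity direction (ruling out $\bar g^{\,t}=0$, or quantifying over all gradient realizations) is exactly how the paper's appendix proof phrases it --- ``equality of the two updates for arbitrary realizations of the stochastic gradients'' --- so it is a welcome clarification rather than a departure.
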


\begin{proof}
Equate (\ref{eq:sgd-view}) with $\theta^{t+1}=\theta^t-\eta_g \sum_k \Omega_k g_k^t$ and match the coefficient of the aggregated gradient:
$(1-\bar\lambda^{\,t})\eta_\ell=\eta_g$.
\end{proof}

\paragraph{Interpretation.}
Larger $\eta_g/\eta_\ell$ implies smaller $\bar\lambda^{\,t}$, i.e., less carry-over of $\theta^t$ and more trust in the new local updates. Standard smoothness suggests choosing $\eta_{\mathrm{eff}}^{\,t}\le 1/L$, equivalently
\(
\bar\lambda^{\,t}\ge 1-\frac{1}{L\eta_\ell}
\)
whenever $L\eta_\ell>1$.

\subsection{Convergence guarantees}
\label{sec:conv}

The SGD view (\ref{eq:sgd-view}) directly yields standard optimization guarantees. We first state a one-step descent inequality, then derive the nonconvex and strongly-convex global results.

\begin{lemma}[One-step descent]
\label{lem:descent}
Under (Asuumption1)–(Assumption2) and $\eta_{\mathrm{eff}}^{\,t}\le 1/L$,
\[
\mathbb{E}\big[F(\theta^{t+1})\big]
~\le~
\mathbb{E}\big[F(\theta^t)\big]
~-~ \frac{\eta_{\mathrm{eff}}^{\,t}}{2}\,\mathbb{E}\big\|\nabla F(\theta^t)\big\|^2
~+~ \frac{L\,(\eta_{\mathrm{eff}}^{\,t})^2}{2}\,\sigma^2.
\]
\end{lemma}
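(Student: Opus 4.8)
The plan is to read (\ref{eq:sgd-view}) as a single stochastic gradient step on the $L$-smooth objective $F$ and apply the standard quadratic upper bound. First I would write the update compactly as $\theta^{t+1}=\theta^t-\eta_{\mathrm{eff}}^{\,t}\,\hat g^t$ with aggregated stochastic gradient $\hat g^t:=\sum_{k\in\mathcal{U}_t}\Omega_k g_k^t$. Since $F=\sum_k\Omega_k F_k$ is a convex combination of $L$-smooth functions (Assumption \ref{as:smoothness}), $F$ is itself $L$-smooth, so the descent inequality gives $F(\theta^{t+1})\le F(\theta^t)-\eta_{\mathrm{eff}}^{\,t}\langle\nabla F(\theta^t),\hat g^t\rangle+\tfrac{L(\eta_{\mathrm{eff}}^{\,t})^2}{2}\|\hat g^t\|^2$.

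Next I would take the expectation conditioned on $\theta^t$ and invoke the two halves of Assumption \ref{assump:grad_variance}. Unbiasedness gives $\mathbb{E}[\hat g^t\mid\theta^t]=\sum_k\Omega_k\nabla F_k(\theta^t)=\nabla F(\theta^t)$, so the cross term collapses to $-\eta_{\mathrm{eff}}^{\,t}\|\nabla F(\theta^t)\|^2$. For the quadratic term I would use the bias–variance split $\mathbb{E}\|\hat g^t\|^2=\|\nabla F(\theta^t)\|^2+\mathbb{E}\|\hat g^t-\nabla F(\theta^t)\|^2$ and then bound the variance.

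The one careful step is the variance bound. I would write $\hat g^t-\nabla F(\theta^t)=\sum_k\Omega_k\big(g_k^t-\nabla F_k(\theta^t)\big)$ and apply Jensen's inequality (convexity of $\|\cdot\|^2$ with weights $\Omega_k$ summing to one) to obtain $\mathbb{E}\|\hat g^t-\nabla F(\theta^t)\|^2\le\sum_k\Omega_k\,\mathbb{E}\|g_k^t-\nabla F_k(\theta^t)\|^2\le\sigma^2$. The advantage of routing through Jensen rather than a sum-of-variances argument is that it requires no independence of the per-client noise across clients, which the model does not guarantee; it only uses $\sum_k\Omega_k=1$ (valid here since $|\mathcal{U}_t|=K$).

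Finally, I would combine the pieces. The coefficient of $\mathbb{E}\|\nabla F(\theta^t)\|^2$ becomes $-\eta_{\mathrm{eff}}^{\,t}+\tfrac{L(\eta_{\mathrm{eff}}^{\,t})^2}{2}$, and the stepsize condition $\eta_{\mathrm{eff}}^{\,t}\le 1/L$ yields $\tfrac{L(\eta_{\mathrm{eff}}^{\,t})^2}{2}\le\tfrac{\eta_{\mathrm{eff}}^{\,t}}{2}$, so this coefficient is at most $-\tfrac{\eta_{\mathrm{eff}}^{\,t}}{2}$; the residual variance contributes exactly $\tfrac{L(\eta_{\mathrm{eff}}^{\,t})^2}{2}\sigma^2$. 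Taking total expectation via the tower property then gives the stated bound. The main obstacle I anticipate is not any single inequality but keeping the reduction to (\ref{eq:sgd-view}) honest: that identity presumes the one-local-step form $\theta_k^{t+1}=\theta^t-\eta_\ell g_k^t$, so I would explicitly flag that the lemma is stated in the single-SGD-step ($r=1$) regime where the effective-stepsize identity holds exactly, and note that Assumption \ref{as:bounded_gradients} is not needed for this descent step.
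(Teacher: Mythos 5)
Your proposal is correct and follows essentially the same route as the paper's proof: apply the $L$-smoothness quadratic bound, substitute the SGD-view update, use unbiasedness to collapse the cross term, bound the second moment by $\|\nabla F(\theta^t)\|^2+\sigma^2$, and absorb the quadratic gradient term via $\eta_{\mathrm{eff}}^{\,t}\le 1/L$. Your only addition is to justify the aggregated variance bound via Jensen's inequality with the weights $\Omega_k$ (a step the paper asserts without proof, and which indeed needs no cross-client independence), together with the accurate remarks that the lemma lives in the single-local-step regime and that Assumption~\ref{as:bounded_gradients} is not used.
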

\begin{proof}
By $L$-smoothness,
$F(\theta^{t+1}) \le F(\theta^t)+\langle \nabla F(\theta^t),\theta^{t+1}-\theta^t\rangle + \frac{L}{2}\|\theta^{t+1}-\theta^t\|^2$.
Substitute (\ref{eq:sgd-view}), take conditional expectation using $\mathbb{E}[\sum_k \Omega_k g_k^t]=\nabla F(\theta^t)$ and $\mathbb{E}\|\sum_k \Omega_k g_k^t\|^2\le \|\nabla F(\theta^t)\|^2+\sigma^2$, and use $\eta_{\mathrm{eff}}^{\,t}\le 1/L$ to absorb the quadratic term in $\|\nabla F(\theta^t)\|^2$.
\end{proof}

\begin{theorem}[Nonconvex rate]
\label{thm:nonconvex}
Let $\eta_{\mathrm{eff}}^{\,t}\equiv \eta_{\mathrm{eff}}\le 1/L$ be constant. Then for any $T\ge 1$,
\[
\frac{1}{T}\sum_{t=0}^{T-1}\mathbb{E}\big\|\nabla F(\theta^t)\big\|^2
~\le~
\frac{2\,(F(\theta^0)-F^\star)}{T\,\eta_{\mathrm{eff}}}
~+~
L\,\eta_{\mathrm{eff}}\,\sigma^2.
\]
\end{theorem}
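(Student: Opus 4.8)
The plan is to treat Lemma~\ref{lem:descent} as the only input and convert its per-round descent guarantee into a bound on the time-averaged squared gradient via a standard telescoping argument. Because the hypothesis fixes $\eta_{\mathrm{eff}}^{\,t}\equiv\eta_{\mathrm{eff}}$ as a constant satisfying $\eta_{\mathrm{eff}}\le 1/L$, the Lemma applies verbatim at every round with identical coefficients, which is exactly what makes the telescoping clean.

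First I would rearrange the one-step inequality to isolate the gradient term, bringing $\tfrac{\eta_{\mathrm{eff}}}{2}\,\mathbb{E}\|\nabla F(\theta^t)\|^2$ to the left and the difference $\mathbb{E}[F(\theta^t)]-\mathbb{E}[F(\theta^{t+1})]$ to the right. Summing this rearranged inequality over $t=0,\dots,T-1$ causes the function-value differences to telescope to $F(\theta^0)-\mathbb{E}[F(\theta^T)]$, while the constant variance term accumulates to $T\cdot\tfrac{L\eta_{\mathrm{eff}}^2}{2}\sigma^2$. Lower-bounding $\mathbb{E}[F(\theta^T)]\ge F^\star$ and dividing through by $T\,\eta_{\mathrm{eff}}/2$ then delivers exactly the stated bound: the first term $\tfrac{2(F(\theta^0)-F^\star)}{T\eta_{\mathrm{eff}}}$ comes from the telescoped optimality gap, and the second term $L\,\eta_{\mathrm{eff}}\,\sigma^2$ from the averaged variance contribution.

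There is no genuine obstacle here: the result is the textbook nonconvex SGD rate, and all the analytical work — the smoothness expansion, the variance control, and the absorption of the quadratic term into the $\|\nabla F(\theta^t)\|^2$ coefficient — was already discharged inside Lemma~\ref{lem:descent} via the SGD-view identity~(\ref{eq:sgd-view}). The only points requiring care are bookkeeping rather than mathematics: one must assume $F$ is bounded below so that $F^\star:=\inf_\theta F(\theta)$ is finite (implicit in writing $F(\theta^0)-F^\star$), and one must keep the expectations consistent by invoking the tower property when chaining the conditional descent bounds across rounds. If anything, the subtlest conceptual point is recognizing that constancy of $\eta_{\mathrm{eff}}$ — equivalently, a fixed population-averaged mixup $\bar\lambda^{\,t}$ by Theorem~\ref{thm:match} — is precisely what lets the step-size coefficient factor out of the sum uniformly; a round-varying $\eta_{\mathrm{eff}}^{\,t}$ would instead leave a weighted average $\big(\sum_t \eta_{\mathrm{eff}}^{\,t}\big)^{-1}\sum_t \eta_{\mathrm{eff}}^{\,t}\,\mathbb{E}\|\nabla F(\theta^t)\|^2$ on the left and require the usual additional care on the step-size schedule.
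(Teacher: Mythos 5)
Your proof is correct and follows essentially the same route as the paper's: both rearrange the one-step descent inequality of Lemma~\ref{lem:descent}, telescope over $t=0,\dots,T-1$, lower-bound the final function value by $F^\star$, and divide by $T\,\eta_{\mathrm{eff}}/2$. Your added remarks on boundedness below, the tower property, and the step-size constancy are sound bookkeeping but do not change the argument.
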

\begin{proof}
Sum Lemma~\ref{lem:descent} over $t=0,\dots,T-1$ and telescope; divide by $T$. Please see appendix .~\ref{sec:proof}
\end{proof}

\begin{theorem}[Strongly-convex case]
\label{thm:strong-convex}
Under (A1)–(A3), with $\eta_{\mathrm{eff}}^{\,t}\equiv \eta_{\mathrm{eff}}\le \min\{1/L,\,1/(2\mu)\}$,
\[
\mathbb{E}\big\|\theta^{t}-\theta^\star\big\|^2
~\le~
(1-\mu\,\eta_{\mathrm{eff}})^t\,\big\|\theta^{0}-\theta^\star\big\|^2
~+~
\mathcal{O}\!\left(\frac{\sigma^2}{\mu}\,\eta_{\mathrm{eff}}\right).
\]
\end{theorem}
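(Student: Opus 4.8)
The plan is to exploit the exact SGD equivalence established in (\ref{eq:sgd-view}), which reduces the analysis to the classical strongly-convex stochastic-gradient recursion. Concretely, with one local step the update reads $\theta^{t+1}=\theta^t-\eta_{\mathrm{eff}}\,g^t$ where $g^t:=\sum_{k\in\mathcal{U}_t}\Omega_k g_k^t$ is the aggregated stochastic gradient. I would first record its two first-order properties: unbiasedness, $\mathbb{E}[g^t]=\sum_k\Omega_k\nabla F_k(\theta^t)=\nabla F(\theta^t)$, which is immediate from Assumption~\ref{assump:grad_variance} and the definition $F=\sum_k\Omega_k F_k$; and a variance bound $\mathbb{E}\|g^t-\nabla F(\theta^t)\|^2\le\sigma^2$, which follows from per-client independence together with $\sum_k\Omega_k^2\le 1$. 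I would also make explicit that the ``strongly-convex case'' additionally assumes $F$ is $\mu$-strongly convex (so that $\langle\nabla F(\theta^t),\theta^t-\theta^\star\rangle\ge\mu\|\theta^t-\theta^\star\|^2$ using $\nabla F(\theta^\star)=0$); this is implicit in the theorem hypothesis but not listed in Assumptions~\ref{assump:grad_variance}--\ref{as:bounded_gradients}.

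The core step is a one-step contraction in the iterate distance. Expanding the squared norm gives
\[
\|\theta^{t+1}-\theta^\star\|^2=\|\theta^t-\theta^\star\|^2-2\eta_{\mathrm{eff}}\langle g^t,\theta^t-\theta^\star\rangle+\eta_{\mathrm{eff}}^2\|g^t\|^2 .
\]
Taking the conditional expectation given $\theta^t$, the cross term becomes $-2\eta_{\mathrm{eff}}\langle\nabla F(\theta^t),\theta^t-\theta^\star\rangle$, which I bound via strong convexity, while the quadratic term splits as $\mathbb{E}\|g^t\|^2=\|\nabla F(\theta^t)\|^2+\mathbb{E}\|g^t-\nabla F(\theta^t)\|^2\le\|\nabla F(\theta^t)\|^2+\sigma^2$. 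Using strong convexity in the sharper form $\langle\nabla F(\theta^t),\theta^t-\theta^\star\rangle\ge F(\theta^t)-F^\star+\tfrac{\mu}{2}\|\theta^t-\theta^\star\|^2$, together with the smoothness inequality $\|\nabla F(\theta^t)\|^2\le 2L\big(F(\theta^t)-F^\star\big)$, I can collect the function-value terms into $-2\eta_{\mathrm{eff}}(1-L\eta_{\mathrm{eff}})\big(F(\theta^t)-F^\star\big)$. This is where the step-size restriction $\eta_{\mathrm{eff}}\le 1/L$ does the work: the factor $1-L\eta_{\mathrm{eff}}$ is nonnegative, so this contribution is $\le 0$ and can be discarded, leaving the clean recursion
\[
\mathbb{E}\big\|\theta^{t+1}-\theta^\star\big\|^2\le(1-\mu\eta_{\mathrm{eff}})\,\mathbb{E}\big\|\theta^{t}-\theta^\star\big\|^2+\eta_{\mathrm{eff}}^2\sigma^2 .
\]

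Finally I would unroll this linear recursion over $t$ rounds, obtaining the homogeneous term $(1-\mu\eta_{\mathrm{eff}})^t\|\theta^0-\theta^\star\|^2$ plus a geometric sum $\eta_{\mathrm{eff}}^2\sigma^2\sum_{j=0}^{t-1}(1-\mu\eta_{\mathrm{eff}})^j$. The second condition $\eta_{\mathrm{eff}}\le 1/(2\mu)$ guarantees $0\le 1-\mu\eta_{\mathrm{eff}}<1$, so the partial sum is dominated by $\sum_{j=0}^{\infty}(1-\mu\eta_{\mathrm{eff}})^j=1/(\mu\eta_{\mathrm{eff}})$, and the noise floor collapses to $\eta_{\mathrm{eff}}^2\sigma^2/(\mu\eta_{\mathrm{eff}})=\eta_{\mathrm{eff}}\sigma^2/\mu=\mathcal{O}(\sigma^2\eta_{\mathrm{eff}}/\mu)$, exactly the stated bound. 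I expect the main obstacle to be bookkeeping rather than conceptual: the delicate points are (i) justifying the aggregate variance bound $\mathbb{E}\|g^t-\nabla F(\theta^t)\|^2\le\sigma^2$ from the per-client Assumption~\ref{assump:grad_variance} (needing cross-client independence and $\sum_k\Omega_k^2\le 1$), and (ii) using $\eta_{\mathrm{eff}}\le 1/L$ precisely to neutralize the $\|\nabla F(\theta^t)\|^2$ term so the contraction factor $1-\mu\eta_{\mathrm{eff}}$ emerges with no residual gradient-norm dependence. I would also flag the single-local-step assumption underlying (\ref{eq:sgd-view}), since multiple local steps would introduce client-drift terms not captured by this clean recursion.
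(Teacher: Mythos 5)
Your proof is correct, but it follows a genuinely different route from the paper's. The paper stays in function-value space: it reuses the one-step descent inequality (Lemma~\ref{lem:descent}), invokes strong convexity only through the Polyak--{\L}ojasiewicz inequality $\|\nabla F(\theta^t)\|^2\ge 2\mu\,(F(\theta^t)-F^\star)$ to get the contraction $\mathbb{E}[F(\theta^{t+1})-F^\star]\le(1-\mu\eta_{\mathrm{eff}})\,\mathbb{E}[F(\theta^{t})-F^\star]+\tfrac{L\eta_{\mathrm{eff}}^2}{2}\sigma^2$, unrolls it, and only at the very end converts back to iterate distances via $F(\theta)-F^\star\ge\tfrac{\mu}{2}\|\theta-\theta^\star\|^2$. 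You instead run the classical strongly-convex SGD analysis directly on $\|\theta^t-\theta^\star\|^2$, using the coercivity form $\langle\nabla F(\theta^t),\theta^t-\theta^\star\rangle\ge F(\theta^t)-F^\star+\tfrac{\mu}{2}\|\theta^t-\theta^\star\|^2$ together with $\|\nabla F(\theta^t)\|^2\le 2L(F(\theta^t)-F^\star)$ so that the step-size condition $\eta_{\mathrm{eff}}\le 1/L$ neutralizes the gradient-norm term. Each route buys something: the paper's argument recycles the nonconvex machinery and needs only the (weaker) PL condition for the recursion itself; your argument is tighter at the endpoints --- the paper's final conversion bounds $\mathbb{E}\|\theta^t-\theta^\star\|^2$ by $\tfrac{2}{\mu}\mathbb{E}[F(\theta^t)-F^\star]$, but turning $F(\theta^0)-F^\star$ into $\|\theta^0-\theta^\star\|^2$ requires the smoothness upper bound $F(\theta^0)-F^\star\le\tfrac{L}{2}\|\theta^0-\theta^\star\|^2$, which silently introduces a condition-number factor $L/\mu$ in front of the homogeneous term that the stated bound omits, whereas your direct recursion produces $(1-\mu\eta_{\mathrm{eff}})^t\|\theta^0-\theta^\star\|^2+\eta_{\mathrm{eff}}\sigma^2/\mu$ exactly as claimed, with constant one. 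You are also more careful than the paper on two points it glosses over: the transfer of the per-client variance bound to the aggregate $g^t=\sum_k\Omega_k g_k^t$ (which indeed needs either Jensen with $\sum_k\Omega_k=1$ or independence with $\sum_k\Omega_k^2\le1$), and the fact that $\mu$-strong convexity of $F$ is an implicit hypothesis not contained in Assumptions~\ref{assump:grad_variance}--\ref{as:bounded_gradients}.
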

\begin{proof}
Apply the standard strongly-convex SGD recursion with step $\eta_{\mathrm{eff}}$ to (\ref{eq:sgd-view}). Please see appendix .~\ref{sec:proof}
\end{proof}

\subsection{Layer-wise mixing and forgetting control}
\label{sec:forgetting}
Layer-wise coefficients $\lambda^{t}_{k,i}$ (with averaging weights $w_i$) produce the same effective step via
\(
\bar\lambda^{\,t}=\sum_{k,i}\Omega_k\,w_i\,\lambda^{t}_{k,i}
\)
and $\eta_{\mathrm{eff}}^{\,t}=(1-\bar\lambda^{\,t})\eta_\ell$. The per-round drift satisfies
\[
\big\|\theta^{t+1}-\theta^t\big\| ~=~ \eta_{\mathrm{eff}}^{\,t}\,\Big\|\sum_{k\in\mathcal{U}_t}\Omega_k\, g_k^t\Big\|.
\]
Hence larger $\bar\lambda^{\,t}$ explicitly caps the amount a single round can overwrite the global representation, limiting catastrophic forgetting, while smaller $\bar\lambda^{\,t}$ accelerates adaptation. Maintaining $\eta_{\mathrm{eff}}^{\,t}\le 1/L$ ensures stability irrespective of the layer schedule.

\subsection{Multiple local steps per round}
\label{sec:multistep}
Suppose each selected client performs $\tau\ge 1$ local SGD steps from $\theta^t$:
\(
\theta_k^{t+1} ~=~ \theta^t - \eta_\ell \sum_{s=0}^{\tau-1} g_{k,s}^t.
\)
Then
\begin{equation*}
\theta^{t+1}
~=~
\theta^t ~-~ (1-\bar\lambda^{\,t})\,\eta_\ell
\sum_{k\in\mathcal{U}_t}\Omega_k \sum_{s=0}^{\tau-1} g_{k,s}^t
~\equiv~
\theta^t ~-~ \eta_{\mathrm{eff},\tau}^{\,t}\,\widehat g^{\,t},
\qquad
\eta_{\mathrm{eff},\tau}^{\,t} := (1-\bar\lambda^{\,t})\,\eta_\ell\,\tau,
\end{equation*}
where $\widehat g^{\,t}$ is the average of the $\tau$ local gradients. Under (Assumption1)–(Assumption2) and bounded second moments, standard local-SGD arguments give
\[
\big\|\mathbb{E}[\widehat g^{\,t}] - \nabla F(\theta^t)\big\|
~\le~
c_1\,L\,\eta_\ell\,(\tau-1)\,G,
\]
for a constant $c_1$ and gradient-moment bound $G$. Consequently, \textit{pMixFed} behaves like SGD with step $\eta_{\mathrm{eff},\tau}^{\,t}$ up to a bias of order $(1-\bar\lambda^{\,t})\,L\,\eta_\ell^2\,\tau(\tau-1)$. Choosing $\bar\lambda^{\,t}$ so that $\eta_{\mathrm{eff},\tau}^{\,t}\le 1/L$ recovers the guarantees of Theorems~\ref{thm:nonconvex}–\ref{thm:strong-convex} with an additional (controlled) residual proportional to $L\,\eta_\ell^2\,\tau(\tau-1)$. \\

\textbf{Remark1.}
\textit{Theorem \ref{thm:match} establishes a direct relationship between the mixup coefficient $\lambda_k$ and the learning rates used in local and global updates. This insight allows us to interpret the mixup mechanism in \textit{pMixFed} as adjusting the effective learning rate at the server, providing a theoretical foundation for selecting $\lambda_k$ based on desired convergence properties.}
\\

\textbf{Remark2.}
\textit{ In practice, this relationship suggests that by tuning $\lambda_k$, we can control the influence of the global model versus the local models in the aggregation process, similar to adjusting the learning rate in \textit{FedSGD}. This is particularly beneficial in heterogeneous environments where clients may have varying data distributions.} \\

\noindent Our theoretical analysis indicates that the mixup coefficient $\lambda_k$ in \textit{pMixFed} plays a role analogous to the learning rate in \textit{FedSGD}. This equivalence provides a deeper understanding of how \textit{pMixFed} leverages the strengths of \textit{FedSGD} while mitigating its weaknesses in heterogeneous settings. By appropriately choosing $\lambda_k$, \textit{pMixFed} can achieve faster convergence and improved robustness. The \textit{pMixFed} algorithm combines the advantages of FedSGD and FedAvg by aggregating model weights rather than gradients, while still ensuring faster convergence even in heterogeneous data settings due to it's adaptation ability. The incorporation of the mixup mechanism enhances stability, providing faster convergence rates compared to FedSGD, particularly in non-convex settings

\section{Methodology} \label{sec:methodology}
In this section, we present our FL approach, which incorporates layer-wise \textit{Mixup} in the feature space. We begin by briefly defining the \textit{Mixup} technique, followed by a detailed description of the proposed algorithm and its integration into the federated setting. We also discuss the rationale behind this design and explain how it facilitates improved personalization across clients.

\subsection{Mixup}

Mixup is a data augmentation technique for enhancing  model generalization \cite{zhang2017mixup} based on learning to generalize on  linear combinations of  training examples. Variations of Mixup have consistently excelled in vision tasks, contributing to improved robustness, generalization, and adversarial privacy.  
Mixup creates augmented samples as:
\begin{equation}
\label{Mixup}
\begin{split}
\Bar{x} = \lambda.x_i+  (1-\lambda).x_j  \\
\Bar{y} = \lambda.y_i+  (1-\lambda).y_j,
\end{split}
\end{equation} 
where $x_i$ and $x_j$ are two input samples, $y_i$ and $y_j$ are the corresponding labels, $\lambda$,  $\lambda \sim \text{Beta}(\alpha, \alpha) , \lambda \in [0, 1]$, is the degree of interpolation between the two samples. 
Mixup relates data points belonging to different classes which has been shown to be successful in mitigating overfitting and improving model generalization \cite{verma2019manifold, guo2019mixup, zhang2020does}. There are many variants of Mixup that has been developed to address specific challenges and enhance its effectiveness. For instance, AlignMixup \cite{venkataramanan2022alignmixup} improves local spatial alignment by introducing transformations that better preserve semantic consistency between input pairs. Manifold Mixup \cite{verma2019manifold} extends the concept to hidden layer representations, acting as a powerful regularization technique by training deep neural networks (DNNs) on linear combinations of intermediate features. CutMix \cite{yun2019cutmix} replaces patches between images, blending visual information while retaining spatial structure. Remix \cite{chou2020remix} addresses class imbalance by assigning higher weights to minority classes during the mixing process, enhancing the robustness of the trained model on imbalanced datasets. AdaMix \cite{guo2019mixup} dynamically optimizes the mixing distributions, reducing overlaps and improving training efficiency. This linear interpolation also serves as a regularization technique that shapes smoother decision boundaries, thereby enhancing the   ability of a trained model to generalize to unseen data.  Mixup   also increases robustness against adversarial attacks \cite{zhang2020does, beckham2019adversarial}; and   improves performance against noise, corrupted labels, and uncertainty as it relaxes the dependency on specific information \cite{guo2019mixup}.

\begin{figure}[t]
\centering
\includegraphics[width=0.75\linewidth]{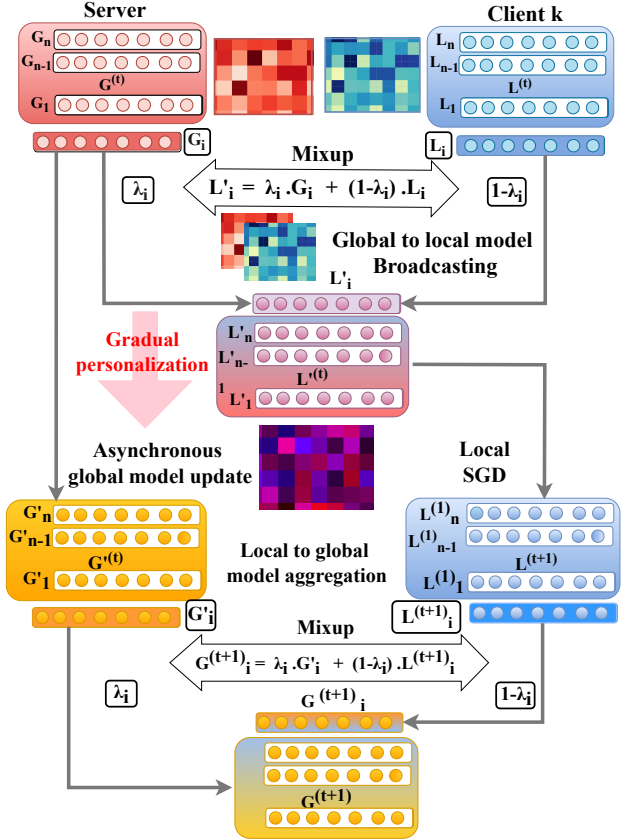}
\caption{Workflow of pMixFed: Mixup is used in two stages. \textbf{1-Broadcasting:} when transferring knowledge to local models, the frozen personalized model $L_k^{(t)}$ is mixed up with global model $G^{(t)}$ according to the adaptive mix factor $\mu_k^{(t)}$ which determines layer-wise mixup degree $\lambda_i$ for layer $i$. \textbf{2-Aggregation:} The updated global model $G^{(t+1)}$ is generated through applying Mixup between the updated local model $L^{(t+1)}$ and the current global model $G'^{(t)}$ state.}
\label{fig:workflow}
\end{figure}
  
\subsection{pMixFed : Partial Mixed up Personalized federated learning}
Our goal is to leverage the well-established benefits of Mixup  
in the context of personalized FL. While Mixup has previously been employed in FL frameworks, such as   XORMixup \cite{shin2020xor}, FEDMIX \cite{yoon2021fedmix}, and FedMix \cite{wicaksana2022fedmix}, prior  studies have primarily focused on using Mixup for data augmentation or data averaging. We propose \textbf{pMixFed} by integrating Mixup  on the model parameter space, rather than using it on the feature space. We apply Mixup between the parameters of the global and the local models in a layer-wise manner for more customized and adaptive PFL. Our approach eliminates the need for static and rigid partitioning strategies. Specifically, during both the broadcasting and aggregation stages of our partial PFL framework, we generate mixed model weights using an interpolation strategy which  is illustrated in Figure \ref{fig:workflow}. Mixup offers the flexibility in combining models by introducing a mix degree for each layer $\lambda_i$, which changes gradually according to $\mu$, i.e., the \textit{mix factor}. Parameter $\mu$ is also updated adaptively in each communication round and for each client according to the test accuracy of the global and the local model during the evaluation phase of FL. $\mu$ is computed as follows:\begin{equation}
\label{eq:adaptive_mu}
 \mu_k^{t} = 1 - \frac{1}{1 + e^{-\delta( Acc^b - 1)}}  
\end{equation}
$\delta = (t/T)$, where $t$, and $T$ are the current communication round, and the overall number of communication rounds respectively. Also, $Acc$ calculated as: $Acc = (Acc_k^t / Acc_{overall}^{(t-1)})$ in the \textit{broadcasting} phase and average test accuracy of the previous global model $\left( Acc = G^{t}(x, \theta^t), y \right)$ on all local test sets $x = \{ x_1, x_2, \ldots, x_K \}$. , in the \textit{aggregation} stage.  $b$ is the offset parameter for sigmoid function which is set to $2$ after several experiments. More detailed discussion on parameter $\mu$'s rule of update is discussed in section \ref{sec:Mu_online} and the reason behind this formulation is discussed in the Appendix Sec.~2. \\
As shown in Figure \ref{fig:workflow}, Mixup is applied in two distinct stages of FL. Firstly, when transferring shared knowledge to local models, the local model $L_k$ is mixed up with the current global model $G$  according to the dynamic mixing factor $\mu$, which determines the change ratio of $\lambda_i$ (layer-wise Mixup degree in Eq. \eqref{Mixup}) across different layers. $\lambda_i$ gradually is changed from $1 \rightarrow 0$ as we move from the head to the base layer. $\lambda_i=1$ means sharing the 100\% of the global model and $\lambda_i=0$ means that the corresponding layer in local model is frozen and will not be mixed up with the global model $G$. Calculation of the Mixup degree of layer $i$ $\lambda_i$ at  both broadcasting and aggregation stages is performed as follows:

\begin{equation}
\label{eq:mu}
\begin{split}
\text{Broadcasting Stage:} \quad \quad \lambda_i = 
\begin{cases}
1 &   \lambda_i > 1\\
\mu * (n - i) &  \lambda_i \leq 1
\end{cases} \\
\text{Aggregation Stage:} \quad \quad \lambda_i = 
\begin{cases}
0 &  \lambda_i \leq 0 \\
1 - (i * \mu) &  \lambda_i  > 0,
\end{cases}
\end{split}
\end{equation}

where $n$ is total number of layers, $i$ is the current layer number starting from the base layer to the head as $i=0 \rightarrow i=n$. $\mu$ is the mix factor which will be adaptively updated in each communication round $t$ and for each local model $k$ according to Eq. \ref{eq:adaptive_mu}. 

\subsubsection{Broadcasting : Global to local model transfer}
This stage involves sharing global knowledge with local clients. In the existing PFL methods, the same weight allocation is typically applied to each heterogeneous local model. In our work, we personalize this process by allowing the local model to select the proportion of layers it requires. For instance, for a cold-start user, more information should be extracted from the shared knowledge model, implying that a few layers should be frozen for personalization. Accordingly, A history of the previous global model $G^t$ will remain which helps with catastrophic forgetting of the generalized model. Additionally, we introduce a gradual update procedure where the value of $\lambda$ gradually decreases from one (indicating fully shared layers) from the base layer to the end, based on the mixing factor $\mu$. The mix layer is adaptively updated in each communication round for each client individually, according to personalization accuracy. With this adaptive and flexible approach, not only can upcoming streaming unseen data be managed, but also the participation gap (test accuracy of new cold start users) would be improved. The Broadcasting phase is illustrated in Figure \ref{alg:FedAltFedSim+Mixup} which is inspired by \cite{kairouz2021advances}. The update rule of local model $L_k^{(t)}$ is as follows: \begin{equation}
\label{pFedMix-Broadcasting}
\begin{split}
&{L'}^{(t)}_{k,i} = (\lambda^{(t)}_{k,i}). G^{(t)}_{i}+ (1-\lambda^{(t)}_{k,i}). L^{(t)}_{k,i} \\&
 L_k^{(t+1)} = {L'}^{(t)}_{k} - \eta \nabla F_k({L'}^{(t)}_{k}).
\end{split}
\end{equation}

\subsubsection{Aggregation: Local to Global Model Transfer} \label{sec:aggregation}
Existing methods primarily categorize layers into two types: personalized layers and generalized layers. The updated global layers from different clients, $G_k^{(t+1)}$, are typically aggregated using Equation \ref{PFL_partial_formula}, which can lead to catastrophic forgetting. Since The base layers of the global model serve as the backbone of shared knowledge \cite{raghu2019rapid}, This issue arises because, during each local update, the generalized layers undergo substantial modifications \cite{pillutla2022federated, oh2021fedbabu}. When the global model is updated by simply aggregating local models, valuable information from previously shared knowledge may be lost, leading to forgetting—even if $G^t$ performs better than the newly aggregated model. To address this challenge, we propose a new strategy that applies Mixup between the gradients of the previous global model and each client’s local model before aggregation. For each client $i$, the mixup coefficient $\lambda_i$ gradually increases from 0 to 1, moving from the head to the base layer, controlled by the mix factor $\mu_i$. Additionally, the base layer is adaptively updated based on the communication round and the generalization accuracy, ensuring robust integration of shared and personalized knowledge. It should be noted that parameter $\mu$ is constant in the aggregation stage for all clients as it's dependent to the average performance of the previous global model.
\begin{equation}
\label{pFedMix-Aggregation}
\begin{split}
& {G'}^{(t+1)}_{k,i} = (\lambda^{(t)}_{i,k}). G^{(t)}_{i} + (1-\lambda^{(t)}_{k,i}). L^{(t+1)}_{k,i},\\&
 G^{(t+1)} =  \sum_{k=1}^{K} \frac{|D_k|}{\sum_{k=1}^{K} |D_k|} G'^{(t+1)}_k.
\end{split}
\end{equation}

The high-level block-diagram visualization of the proposed method is shown in Figure \ref{fig:workflow}. It is important to note that the sizes of local models $LM^{(0)}_i$ can differ from each other. Consequently, the size of $GM^{(0)}$ should be greater than the maximum size of local models. The parameter $\lambda$ in Equation \ref{Mixup} determines the Mixup degree between the shared model $GM$ and the local models $LM^{(\cdot)}_i$, while $\mu$ governs the slope of the change in $\lambda$ across different layers. The degree of Mixup gradually decreases according to the parameter $\mu$ from $0$ to $1$. In this scenario, $\lambda = 0$ for the first base layer, indicating total sharing, while $\lambda = 1$ applies to the final layer, which represents no sharing. The underlying concept 
 is that the base layer contains more general information, whereas the final layers retain client-specific information. The use of the parameter $\mu$, relative to the number of local layers, eliminates the need for a specified cut layer $k$ and allows its application across different model sizes and layers.
  The parameter $\mu_i$ is adaptively updated based on the personalized and global model accuracy for each client. Algorithm \ref{alg:FedAltFedSim+Mixup} shows how Mixup is used as a shared aggregation technique between individual clients and the server. In each training round, only one client is \textit{Mixed up} with the global model, and   $\lambda$   is adaptively learned based on the objective function using online learning.
Algorithm \ref{alg:MixupAggregation} shows how Mixup is employed as a shared aggregation technique between the clients and the server. In each training round, only one client is ``Mixed up'' with the global model and the $\lambda$ parameter is adaptively learned based on the objective function using online learning.

\begin{figure}[H]
  \begin{minipage}{0.5\textwidth}
      \begin{algorithm}[H]
      \caption{Broadcasting:  global to local model transfer}
      \label{alg:FedAltFedSim+Mixup}
      \begin{algorithmic}[1]
        \State \textbf{Input:} Initial states global model: $G^{(0)}$, local models: $\{L^{(0)}_i\}_{i=1, \ldots, N}$, Number of communication rounds $T$, Number of communication rounds $T$, Number of local training rounds $Itr$, Number of layers in local models $\{L_i\}$
        \For{$t = 0, 1, \ldots, T - 1$}
          \State Server selects $K$ devices $S(t) \subset \{1, \ldots, N\}$
          \State Update $\mu$ for each $L_i$, $i = \{1, \ldots, K\}$
          \State Server broadcasts $G^{(t)}$ to each device in $S(t)$
          \For{each device $k \in S(t)$ }
        \State $L'^{(t)}_k = \text{Mix}[(L^{(t)}_k, G^{(t)}_k),\mu_{Broad}]$
            \For{$l = 0, 1, \ldots, Itr - 1$}
              \State $L'^{(t+1)}_k \xleftarrow{Localtrain} (L'^{(t)}_k , \eta)$ 
            \EndFor
            \State Each device saves the updated model $L^{(t+1)}_k = L'^{(t+1)}_k$
          \EndFor
        \EndFor
      \end{algorithmic}
    \end{algorithm}
  \end{minipage}
  \hfill
  \begin{minipage}{0.48\textwidth}
   \begin{algorithm}[H]
      \caption{Aggregation:local to global model transfer}
      \label{alg:MixupAggregation}
      \begin{algorithmic}[1]
        \State \textbf{Input:} Initial states global model: $G^{(0)}$, Number of communication rounds $T$, number of devices per round $s(t)$, Mix factor $\mu$
        \For{$t = 0, 1, \ldots, T - 1$}
          \For{each device $k \in S(t)$}
          \State Update $\mu^t_k$ for $k$ in round $t$
        \State $G'^{(t+1)}_k = \text{Mix}[(L^{(t+1)}_k, G^{(t)}_k), \mu_{Agg}]$
            \EndFor
            \State Each Device sends $G'^{(t+1)}_k$ back to server
            \State $G^{(t+1)} \xleftarrow{Agg} {G'^{(t+1)}_k}_{k \in S(t)}$
            
            \EndFor
      \end{algorithmic}
    \end{algorithm}
  \end{minipage}
\end{figure}

\section{Experiments} \label{sec:experiments}

\begin{table}[htbp]
  \centering
  \caption{Average Test Accuracy of different methods using Mobile-Net model. See more details in section \ref{sec:Comparative Results}}
  \resizebox{\textwidth}{!}{
    \begin{tabular}{ccccc|cccc}
    \toprule
    \multirow{4}[4]{*}{Method} 
    & \multicolumn{4}{c|}{CIFAR-10} & \multicolumn{4}{c|}{CIFAR-100}  \\
       
\cmidrule{2-5} \cmidrule{6-9} & \multicolumn{2}{c}{$N=100$} &  \multicolumn{2}{c}{$N=10$}   & \multicolumn{2}{c}{$N=100$} &  \multicolumn{2}{c}{$N=10$}  \\

          & $C=10\%$ & $C=100\%$ &  $C=10\%$ & $C=100\%$ &  $C=10\%$ & $C=100\%$ &
           $C=10\%$ & $C=100\%$  \\
    \midrule \midrule
    FedAvg & 26.93    & 25.61   & 15.64   & 19.55   & 4.50   & 4.54   & 17.65   & 21.24    \\
    FedAlt &  39.36 & 46.06   &  39.60   & 42.79   & 27.85   & 19.07   & 48.30   &  14.31    \\
    FedSim &  51.12    & 45.43   & 40.30   & 35.43   & 28.18    & 19.52    & 47.41   & 48.30    \\
    FedBaBU  & 28.70    & 25.98   & 17.58   & 20.74   & 4.63   & 4.72    & 17.46   & 17.57      \\
    Ditto & 26.58    & 62.77   & 18.60   & 43.98    & 7.00   & 16.07   & 16.89   & 20.74   \\
    FedRep & 29.60    &  31.25   & 32.46 & 51.73  & 14.23    & 28.03 & 33.75 & 45.76      \\
    Per-FedAvg & 34.30    & 43.59    & 19.52    & 38.29    & 24.04    & 30.65    & 16.27    & 37.20   \\
    Lg-FedAvg & 34.52    & 34.17   & 48.88   & 58.51    & 5.65   & 5.73   & 31.89   & 35.78    \\
   
    \textbf{pMixFed} & \textbf{69.94}   & \textbf{72.42}   & \textbf{54.90}   & \textbf{74.62}    & \textbf{45.62}   & \textbf{56.63}   & \textbf{54.71}   & \textbf{58.25}    \\
    \midrule
    \end{tabular}}
  \label{tab:Mobile_test_accuracy}
\end{table}

\begin{table}[htbp]
  \centering
  \caption{Average Test Accuracy of different methods using CNN model. See more details in section \ref{sec:Comparative Results}}
  \Large
  \renewcommand{\arraystretch}{1.22}
  \resizebox{\textwidth}{!}{
    \begin{tabular}{ccccc|cccc|cc}
    \toprule
    \multirow{4}[4]{*}{Method} 
    & \multicolumn{4}{c|}{CIFAR-10} & \multicolumn{4}{c|}{CIFAR-100} &  \multicolumn{2}{c|}{MNIST} \\
       
\cmidrule{2-11}  & \multicolumn{2}{c}{$N=100$} &  \multicolumn{2}{c}{$N=10$}   & \multicolumn{2}{c}{$N=100$} &  \multicolumn{2}{c}{$N=10$}  & \multicolumn{1}{c}{$N=100$} &  \multicolumn{1}{c}{$N=10$} \\

          & $C=10\%$ & $C=100\%$ &  $C=10\%$ & $C=100\%$ &  $C=10\%$ & $C=100\%$ &
           $C=10\%$ & $C=100\%$ &
           $C=100\%$ & $C=100\%$ \\
    \midrule \midrule
    FedAvg & 54.78    & 56.82   & 44.11   & 54.37   & 25.77   & 26.73   & 34.47   & 39.93   & 97.54   & 98.59   \\
    FedAlt & 56.41    & 56.77   & 69.50   & 64.80   & 15.19   & 10.56   & 28.30   & 26.53   & 97.37   & 99.21   \\
    FedSim & 59.90    & 56.07   &   63.46   & 38.34   & 14.80    & 10.46   & 27.00   & 26.47  & 98.63   & 99.60   \\
    FedBaBU  &   53.12  & 54.60   & 39.77   & 53.21   & 16.77   & 17.33    & 25.60   & 32.47   & 98.19   & 99.07   \\
    Ditto & 46.86    & \textbf{79.60}   & 31.65   & 60.75    & 27.16   & \textbf{42.93}   &  25.38   & 35.27   & 98.03   & 95.51  \\
    FedRep & 53.38    & 57.97   & 47.77 & 67.32  & 34.67    & 31.04 & 24.95 & 39.96 & 97.04  & 98.71    \\
    Per-FedAvg & 39.37    & 45.03    & 10.00    & 48.13   & 32.67   & 39.01    & 8.71    & 41.21    & 98.32   & 50.34   \\
    Lg-FedAvg & 62.28    & 62.99   & 62.46 & 71.73  & 28.75    & 28.03 & 33.75 & 45.76  & 97.65   & 98.82    \\
    \textbf{pMixFed} & \textbf{65.30}    & 75.49   & \textbf{74.36}   & \textbf{75.06}    & \textbf{34.66}   & 41.56   & \textbf{43.47}   & \textbf{51.46}   & \textbf{99.88}   & \textbf{99.98}  \\
    \midrule
    \end{tabular}}
  \label{tab:CNN_test_accuracy}
\end{table}

\subsection{Experimental Setup}

\subsubsection{Datasets:}
We used three datasets widely used federated learning: \textbf{MNIST} \cite{lecun1998gradient}, \textbf{CIFAR-10}, \textbf{CIFAR-100} \cite{alex2009learning}, and \textbf{Caltech-101}. CIFAR-10 consists of 50,000 images of size $32 \times 32$ for training and 10,000 images for testing. CIFAR-100, on the other hand, consists of 100 classes, with 500 $32 \times 32$ images per class for training and 100 images per class for testing. MNIST contains 10 labels and includes 60,000 samples of $28 \times 28$ grayscale images for training and 10,000 for testing. Caltech-101 is a canonical benchmark for object recognition, introduced by Li and colleagues in 2003. The dataset comprises 9,146 images distributed across 101 object categories—such as helicopter, elephant, and chair—plus a background/clutter class. Category sizes range from roughly 40 to 800 images (with most near 50), and images are typically around 300×200 pixels. \\
We followed the setup in \cite{mcmahan2017communication} and \cite{oh2021fedbabu} to simulate heterogeneous, non-IID data distributions across clients for both train and test datasets. The maximum number of classes per user is set to $S=5$ for CIFAR-10 and MNIST, and $S=50$ for CIFAR-100. Experiments were conducted across varying heterogeneous settings, including small-scale ($N=10$) and large-scale ($N=100$) client populations, with different client participation rates $C = [100\%, 10\%]$ to measure the effects of stragglers. For Caltech-101 experiment setup, please refer to the appendix \ref{app:caltech}. 
\subsubsection{Training Details:} For evaluation, we have reported the average test accuracy of the global model \cite{yuan2021we} for different approaches. The final global model at the last communication round is saved and used during the evaluation. The global model is then personalized according to each baseline's personalization or fine-tuning algorithm for $r=4$ local epochs and $T=50$. For \textbf{FedAlt}, the local model is reconstructed from the global model and fine-tuned on the test data. For \textbf{FedSim}, both the global and local models are fine-tuned partially but simultaneously. In the case of \textbf{FedBABU}, the head (fully connected layers) remains frozen during local training, while the body is updated. Since we could not directly apply \textbf{pFedHN} in our platform setting, we adapted their method using the same hyper parameters discussed above and employed hidden layers with 100 units for the hypernetwork and 16 kernels. The local update process for \textbf{LG-FedAvg}, \textbf{FedAvg}, and \textbf{Per-FedAvg} simply involves updating all layers jointly during the fine-tuning process. The global learning rate for the methods that need sgd update in the global server e.g., FedAvg, has been set from $lr_{global} = [1e-3 , 1e-4 , and 1e-5 ]$.It should be noted that due to the performance drop for some methods (FedAlt , FedSim) in round 10 or 40 in some settings, we've reported the highest accuracy achieved. Also this is the reason the accuracy curves are illustrated for 39 rounds instead of 50. \footnote{Change of hyper-parameters such as lr, batch size, momentum and even changing the optimizer to adam didn't help with the performance drop in most cases.}

\subsubsection{Baselines and backbone:} We compare our method  \textbf{pMixFed}: an adaptive and dynamic mixup-based PFL approach, against several baselines. These baselines include:  \textbf{FedAvg} \cite{mcmahan2017communication},  \textbf{FedAlt} \cite{pillutla2022federated},  \textbf{FedSim} \cite{pillutla2022federated},   \textbf{FedBABU} \cite{oh2021fedbabu}. Additionally, we compare against full model personalization methods, \textbf{Ditto} \cite{li2021ditto}, \textbf{FedRep} \cite{yang2023fedrep} \textbf{Per-FedAvg} \cite{fallah2020personalized} , and \textbf{LG-FedAvg} \cite{liang2020think}.
For all experiments, the number of local training epochs was set to $r=4$, number of communication rounds was fixed at $T=50$, and the batch size was 32. The Adam optimizer has been used while the learning rate, for both global and local updates, was $lr = 0.001$ across all communication rounds. The average personalized test accuracy across individual client's data in the final communication round is reported in Table \ref{tab:CNN_test_accuracy} and \ref{tab:Mobile_test_accuracy}. Figure \ref{image:comparison} presents the training accuracy versus communication rounds for CIFAR10 and CIFAR100 datasets. 

\subsubsection{Model Architectures:} Following the FL literature, we utilized several model architectures. For MNIST, we used a simple CNN consisting of 2 convolutional layers (each with 1 block) and 2 fully connected layers. for CIFAR10 and CIFAR100, we employed a CNN with 4 convolutional layers (1 block each) and 1 fully connected layer for all datasets. Additionally, we used MobileNet, which comprises 14 convolutional layers (2 blocks each) and 1 fully connected layer, for CIFAR-10 and CIFAR-100. For partial model approaches such as \textbf{FedAlt} and \textbf{FedSim}, the split layer is fixed in the middle of the network: for CNNs, layers [1,2] are shared, while for MobileNet, layers [1–7] are considered as shared part. More details about the training process are discussed in Appendix Sec.~4.1. Experimental Setup. Our implementation is also available as a supplement for reproducing the results. 

\begin{figure*}[t]
\centering
\includegraphics[width=0.9\textwidth]{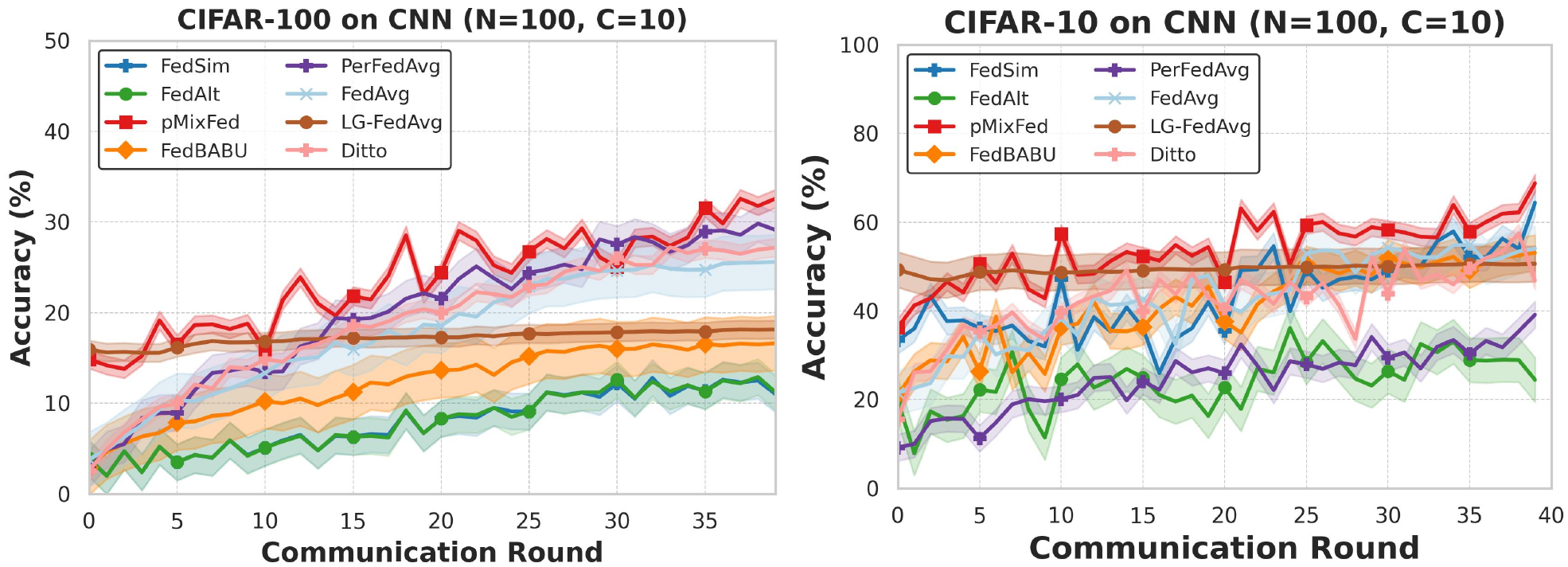} \par\vspace{0.6em}   
\includegraphics[width=0.9\textwidth]{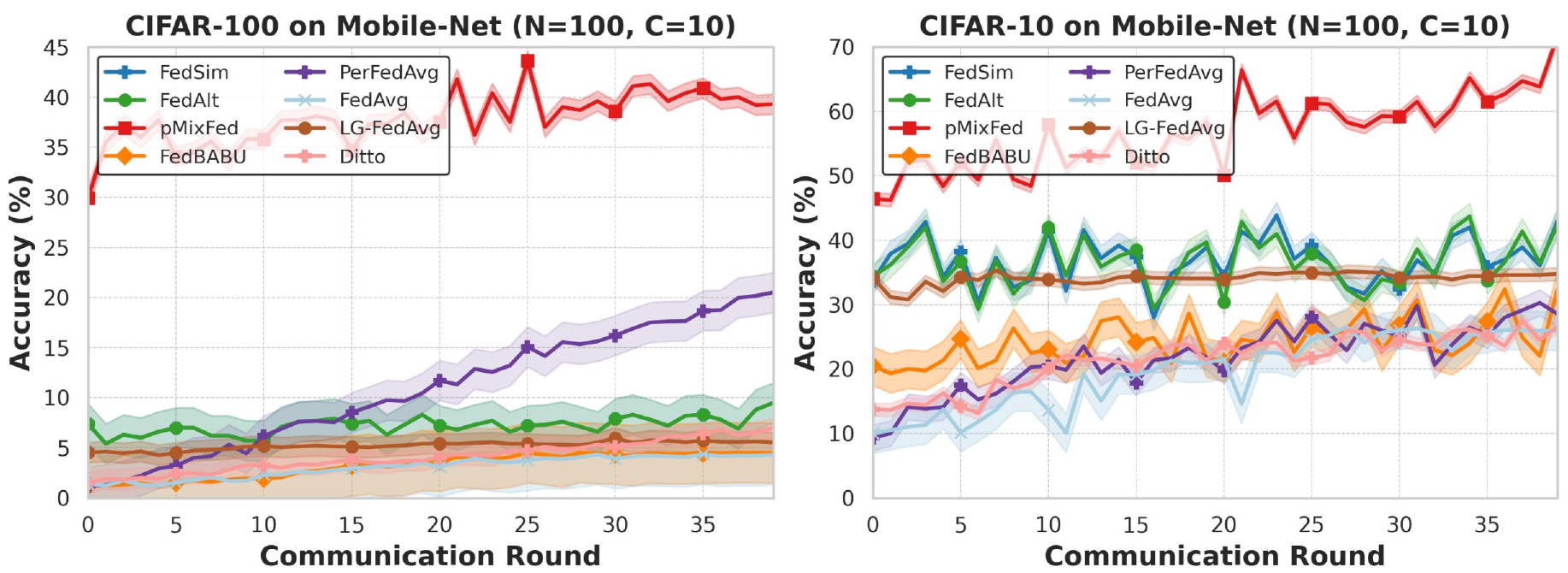}
\caption{Average Test accuracy in different global communication rounds for \textit{pMixFed} and other PFL baselines experimented on CIFAR10 and CIFAR100 where N=100, C=10\%. More details are discussed in Section  \ref{analytic_experiment}}
\label{image:comparison}
\end{figure*}

\subsection{Comparative Results} \label{sec:Comparative Results}
We evaluated our proposed method: \textbf{pMixFed}, which utilizes both adaptive and dynamic layer-wise mixup degree updates, and \textbf{pMixFed-Dynamic}, where the mixup coefficient $\mu_i$ is fixed across all training rounds and gradually decreases from $1 \rightarrow 0$ from the head to the base layer ($i=M \rightarrow 1$) following a Sigmoid function designed for each client. Our results show that the average test accuracy of \textit{pMixFed} outperforms baseline methods in most cases. As shown in Figure \ref{image:comparison}, the accuracy curve of \textit{pMixFed} exhibits smoother and faster convergence, which may suggest the potential for early stopping in FL settings. Previous studies on mixup also suggest that linear interpolation between features provides the most benefit in the early training phase \cite{zou2023benefits}. Moreover, Figure \ref{image:comparison} highlights that partial models with a hard split are highly sensitive to hyperparameter selection and different distribution settings, which can lead to training instability. This issue appears to be addressed more effectively in \textit{pMixFed}, as discussed further in Section \ref{sec:resiliance}. According to the results in Table \ref{tab:CNN_test_accuracy} and Table \ref{tab:Mobile_test_accuracy}, \textit{Ditto} demonstrates relatively robust performance across different heterogeneity settings. However, its effectiveness diminishes as the model depth increases, such as with MobileNet, and under larger client populations ($N=100$). On the other hand, while \textit{FedAlt} and \textit{FedSim} report above-baseline results, they consistently fail during training. \footnote{Adjusting hyperparameters did not resolve this issue.} \\
\textbf{Caltech-101.}
To further evaluate robustness under high inter-class variability and limited per-class samples, 
we conduct experiments on Caltech-101, a canonical benchmark for heterogeneous federated learning. 
This dataset poses unique challenges due to its long-tail distribution (most categories have $\sim$50 images) 
and high intra-class variability. We follow the same federated setup as in CIFAR/MNIST experiments, 
splitting the data among $N=\{10,100\}$ clients with participation rates $C=\{10\%,100\% \}$. 
As in prior work, each client is restricted to at most $S=30$ classes. 
We compare pMixFed against two recent state-of-the-art personalization baselines: 
\textbf{FedRep} \cite{yang2023fedrep}, which learns shared representations with personalized heads, 
and \textbf{Per-FedAvg} \cite{fallah2020personalized}, which meta-learns a global initialization for fast personalization. 
We also include \textbf{FedAlt} \cite{pillutla2022federated} as a representative partial-sharing baseline.

\begin{table*}[ht]
\vspace{-1mm}
\footnotesize
\centering
\caption{\textbf{Caltech-101} top-1 accuracy (\%) with a CNN backbone across four federated settings. 
Values are the average across evaluation angles A--D; full A--D breakdowns are reported in Appendix~\ref{app:caltech}.}
\label{tab:caltech_main}
\vspace{-2mm}
\setlength{\tabcolsep}{8pt}
\renewcommand{\arraystretch}{1.25}
\begin{tabular}{l|cccc}
\toprule
\textbf{Method} 
& \textbf{$N{=}100,~C{=}100\%$} 
& \textbf{$N{=}10,~C{=}100\%$} 
& \textbf{$N{=}100,~C{=}10\%$} 
& \textbf{$N{=}10,~C{=}10\%$} \\
\midrule
\textbf{pMixFed (Ours)} & \textbf{79.3} & \textbf{82.4} & \textbf{76.1} & \textbf{79.0} \\
FedRep                  & 77.0          & 80.1          & 74.2          & 76.9          \\
Per-FedAvg              & 76.5          & 81.0          & 73.5          & 76.2          \\
FedAlt                  & 73.1          & 75.5          & 69.5          & 72.2          \\
\bottomrule
\end{tabular}
\vspace{-3mm}
\end{table*}

\noindent Across all settings, pMixFed consistently achieves the highest accuracy (Table~\ref{tab:caltech_main}). 
In particular, under the most heterogeneous regime ($N{=}100$, $C{=}10\%$), pMixFed improves by $\sim$2\% over FedRep 
and $\sim$3\% over Per-FedAvg, while FedAlt lags by 6--7\%. 
Per-FedAvg performs competitively in the low-client regime ($N=10$), 
but its advantage diminishes as heterogeneity increases. 
These results validate that pMixFed not only excels on CIFAR and MNIST benchmarks, 
but also generalizes robustly to harder, imbalanced datasets like Caltech-101.


\subsection{Analytic Experiments} \label{analytic_experiment}

\begin{figure*}[t]
\centering
\includegraphics[width=0.49\textwidth]{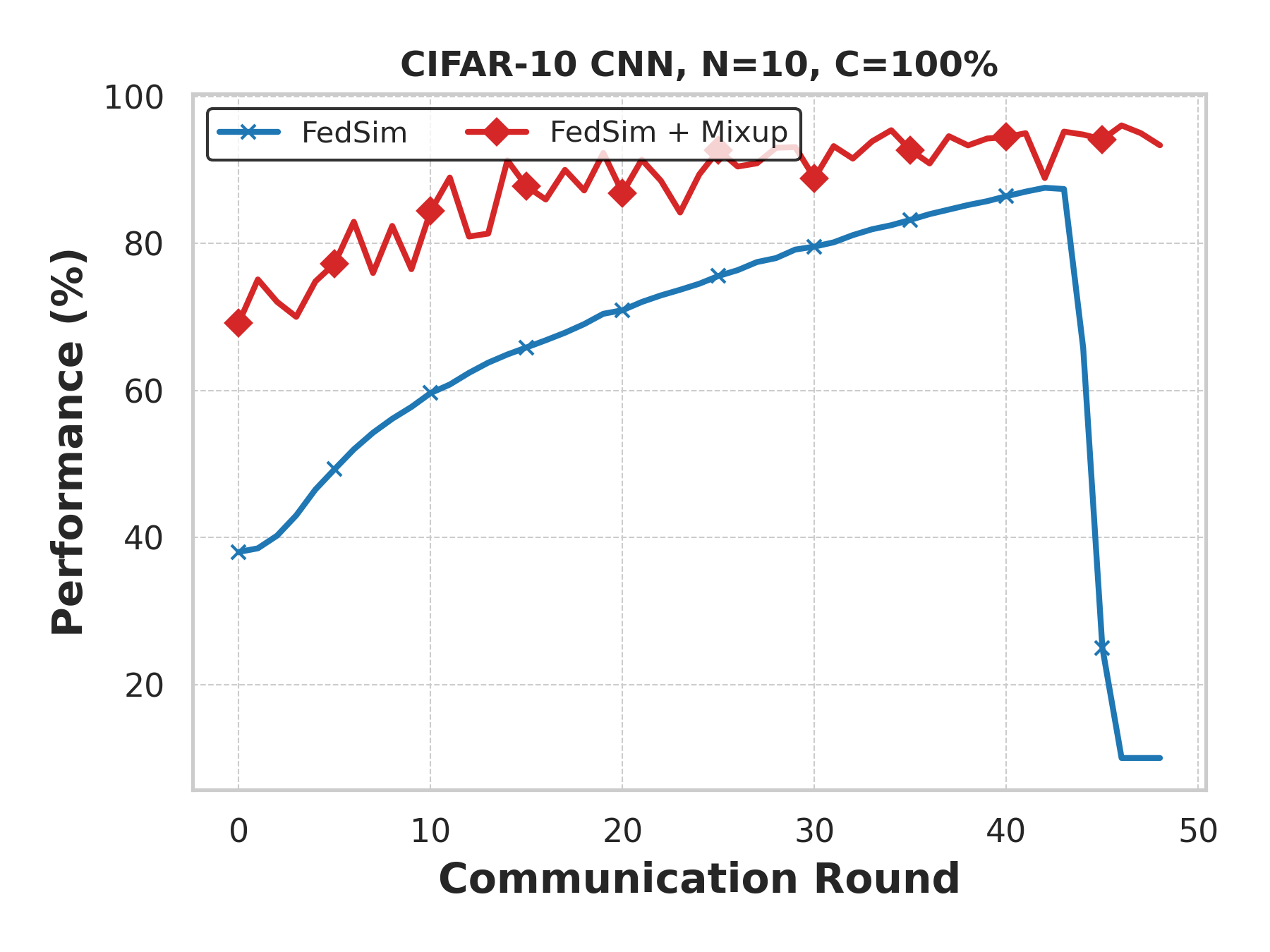}
\includegraphics[width=0.49\textwidth]{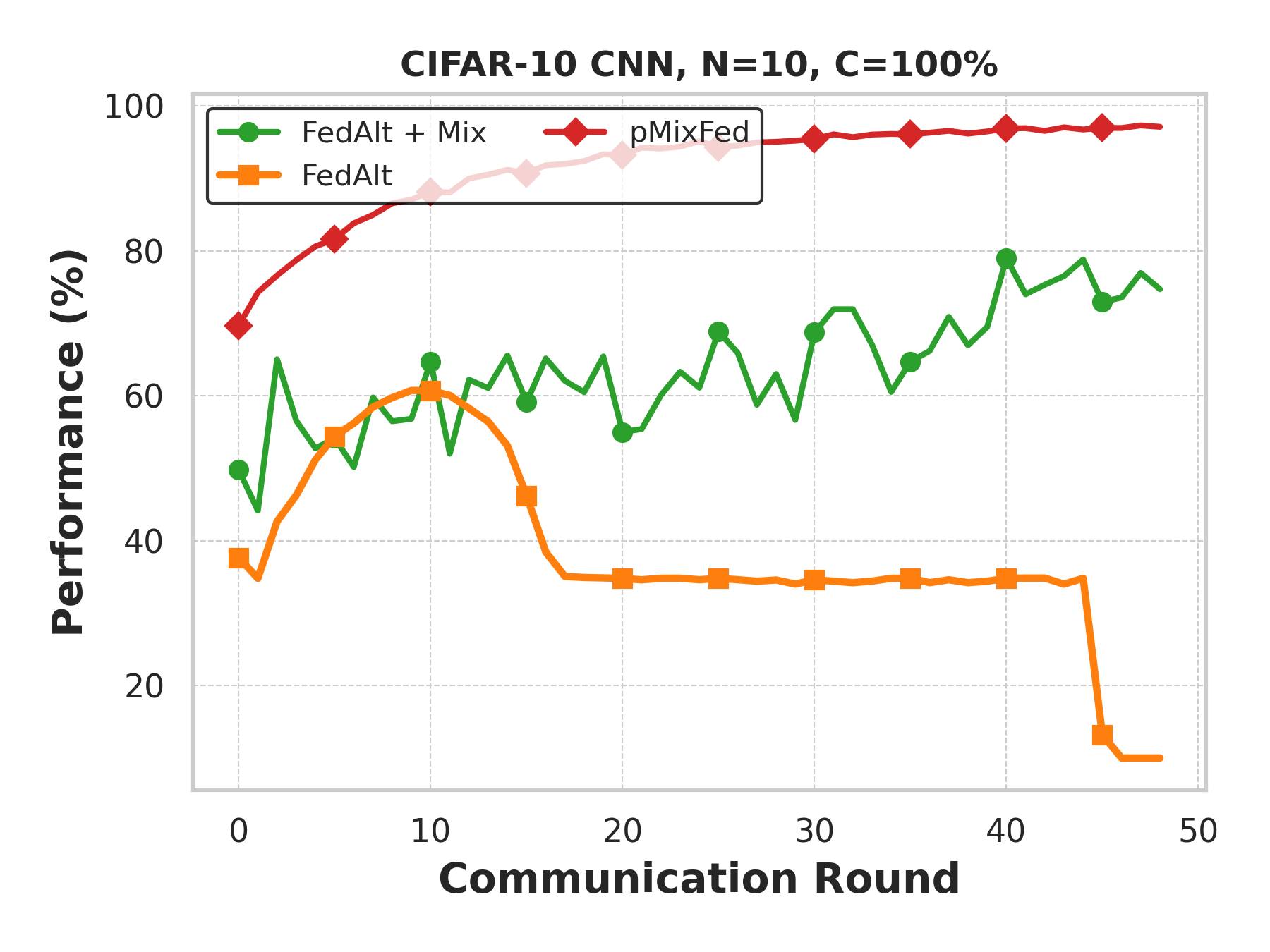}
\caption{\textbf{(a)} The accuracy drop in FedSim occurred due to the vanishing gradient at round 42. \textbf{(b)} accuracy declines at round 10 in FedAlt due to the introduction of 5 new participants. Applying adaptive mixup solely between corresponding global and local shared layers mitigates the accuracy drop. 
}
\label{fig:perofrmance-drop}
\end{figure*}

\textbf{Adaptive Robustness to Performance Degradation:} \label{sec:resiliance}
During our experiments, we observed the algorithm's ability to adapt and recover from performance degradation, especially in challenging scenarios such as gradient vanishing, adding new users, or introducing unseen incoming data. For instance, in complex settings with larger models, such as \textit{MobileNet} on the CIFAR-100 dataset, partial PFL models like \textit{FedSim} and \textit{FedAlt} experience sudden accuracy drops due to zero gradients or the incorporation of new participants into the cohort. We believe the reason behind this phenomenon is due to the: 1-local and global model update discrepancy in partial models with strict cut in the middle depicted in Figure \ref{fig:partial_model}. This degradation is mitigated by the adaptive mixup coefficient, which dynamically adjusts the degree of personalization based on the local model’s performance during both the broadcasting and aggregation stages. Specifically, if the global model $G^{(t)}$ lacks sufficient strength, the mixup coefficient $\mu^{(t)}$ is reduced, decreasing the influence of global model. 2- Catastrophic forgetting which is addressed in \textit{pMixFed} by keeping the historical models $H \big|_{1}^{T}G^t$ in the aggregation process as discussed in section \ref{sec:aggregation}. Figure \ref{fig:perofrmance-drop} illustrates that even applying mixup only to the shared layers of the same partial PFL models (FedAlt and Fedsim) enhances resilience against sudden accuracy drops, maintaining model performance over time. In both experiments, the mixup degree for personalized layers is set to $\lambda_i = 0$ for all clients similar to \textit{FedSim and FedAlt} algorithms. \\

\subsection{Ablation Study}
\textbf{Random vs gradual mix factor from $\beta$ distribution:}
In this paper, we have explored different designs for calculating g mix factor $\mu_k$. 
The value of $\lambda$ in Eq. \ref{Mixup}, naturally sampled from a $\beta \neq (\alpha, \alpha)$ distribution \cite{zhang2017mixup} which is on the interval [0,1]. We have also experimented the random $\lambda_i$ using $\beta$ distribution with different $\alpha$. If $\alpha = 1 $, the $\beta$ distribution is uniform meaning that the $\lambda$ would be sampled uniformly from [0,1]. Moreover $\alpha > 1 $, The $\lambda$  would be more in between, creating a more mixed output between $L_k$ and $G$. On the contrary, if $\alpha < 1 $ the mixed model tend to choose just one of the global and local models where $\lambda = 1 or \lambda = 0$. The effects of different $\alpha$ on mixup degree $\lambda$ is discussed in Appendix sec.~4.3 \\

\begin{figure*}[t]
\centering
\includegraphics[width=0.49\textwidth]{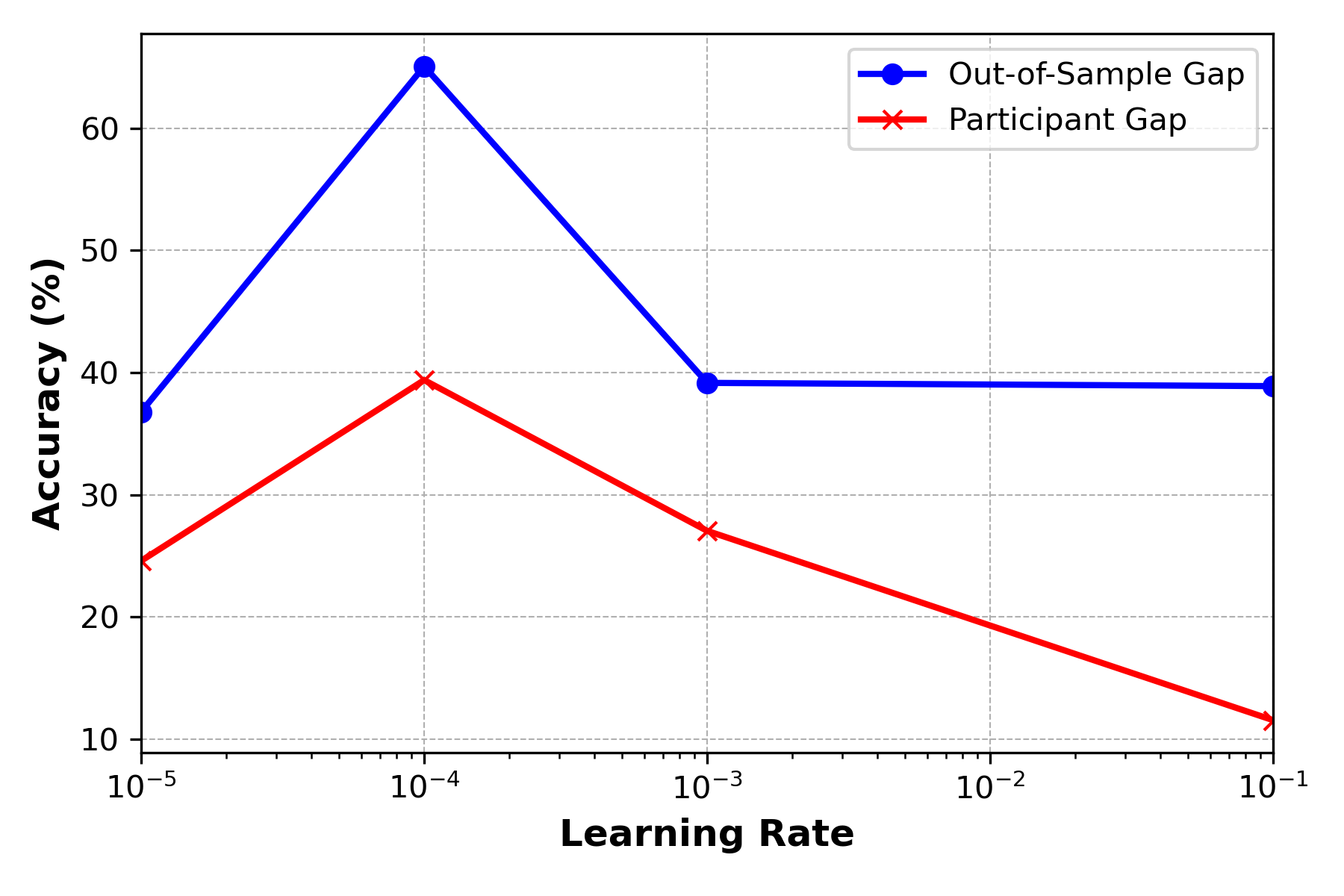}
\includegraphics[width=0.49\textwidth]{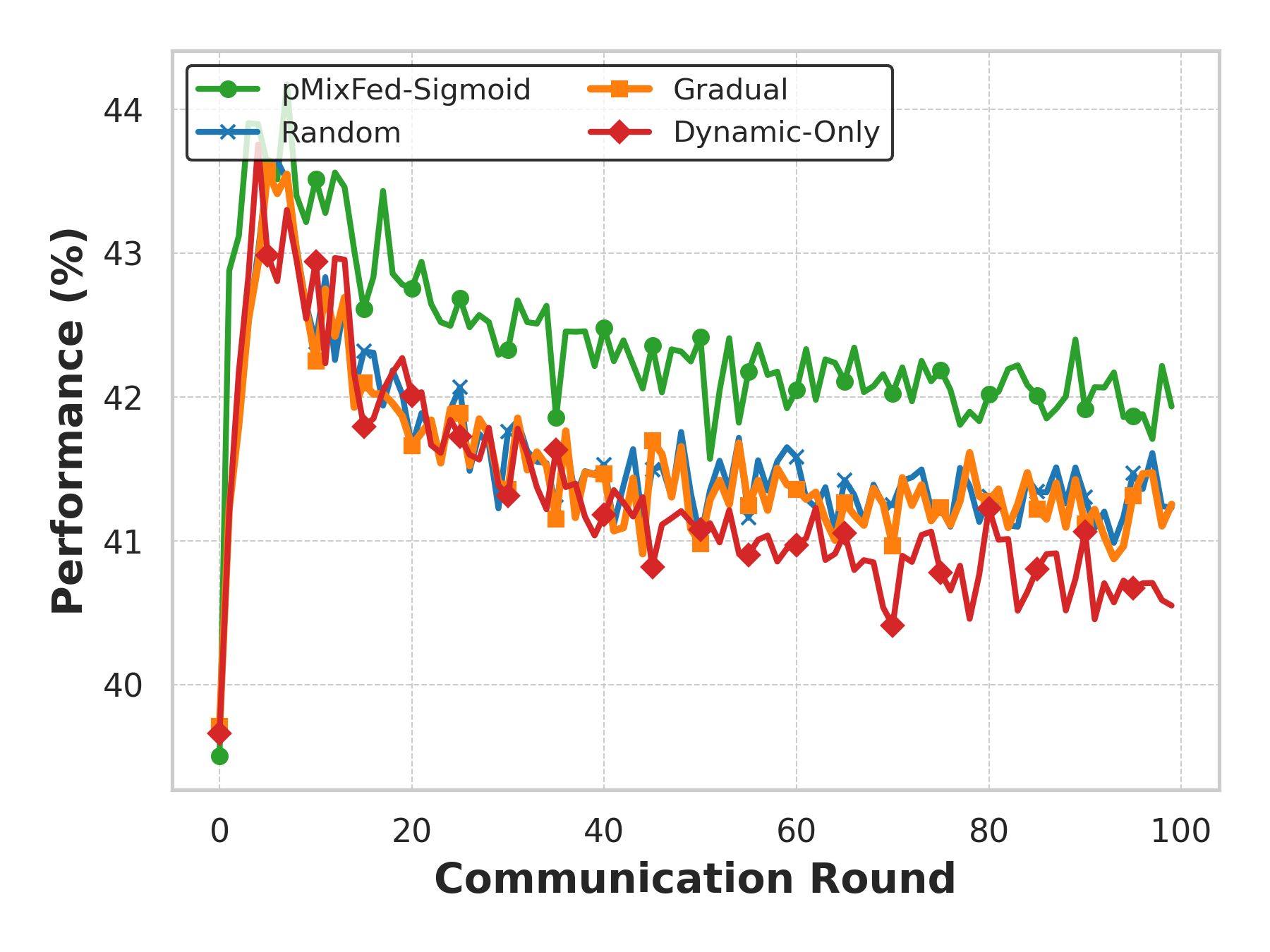}

\caption{\textbf{(a)} Effect on learning rate on average test accuracy(out-of-sample) gap and on the cold-start users. \textbf{(b)} The comparison between test accuracy on the cold-start-users with different \textit{Mix factor} functions. \textbf{(Dynamic-only)}. In this scenario we used a fixed $Mu$ for all communication rounds. \textbf{(Sigmoid)}. The original updating strategy based on a sigmoid function. \textbf{(Gradual)}. A simple linear function has been adapted for updating $Mu$. \textbf{(Random)} Mixup degree $\lambda_i$ is selected randomly from $\beta$ distribution.}
\label{fig:different_mu}
\end{figure*}

\noindent \textbf{ Mix Factor($\mu$): sigmoid vs Dynamic-only} 
\label{sec:Mu_online}
In this study, we have exploited two different functions to update adaptive mixup factor($\mu$) in each communication round. This idea is based on the performance of the model which we want to update. In 1st scenario $\sigma$ function has been adapted as shown in equation \ref{eq:adaptive_mu} for adaptively updating mixup degree $\lambda_i$ using sigmoid function. 
On the other hand, the 2nd scenario, Dynamic-only, $Mu$ is fixed over all communication rounds. The comparison of these two scenarios as well as the effect of different $t$ values on the test accuracy, is depicted in Figure \ref{fig:different_mu} \textbf{(b)}. \\

\noindent \textbf{Effect of Mixup Degree as Learning Rate (lr): } \label{sec:lambda=lr}
We observed that the effect of the mixup coefficient is highly influenced by the learning rate and its decay. To empirically demonstrate this relationship, we measured the impact of the learning rate on new participants (cold-start users) as well as on the out-of-sample gap (average test accuracy on unseen data). The results of this comparison are presented in Figure \ref{fig:different_mu} \textbf{(a)}. Additional details are provided in the Appendix sec.~3.4.1. Analytical analysis of the Effect of learning rate and mixup degree. 


\subsection{Communication and Computational Cost}

Although \textit{pMixFed} introduces additional computation compared to partial personalization baselines (e.g., FedAlt, FedSim), we analyze its efficiency in terms of communication and runtime. Figure~\ref{fig:different_mu_round}(a) shows the number of layers frozen at each communication round due to a zero mixup degree, which naturally reduces the amount of information exchanged. Figure~\ref{fig:different_mu_round}(b) reports the average communication overhead across 100 clients for CNN and MobileNet on CIFAR-100.

\begin{figure*}[t]
\centering
\includegraphics[width=0.49\textwidth]{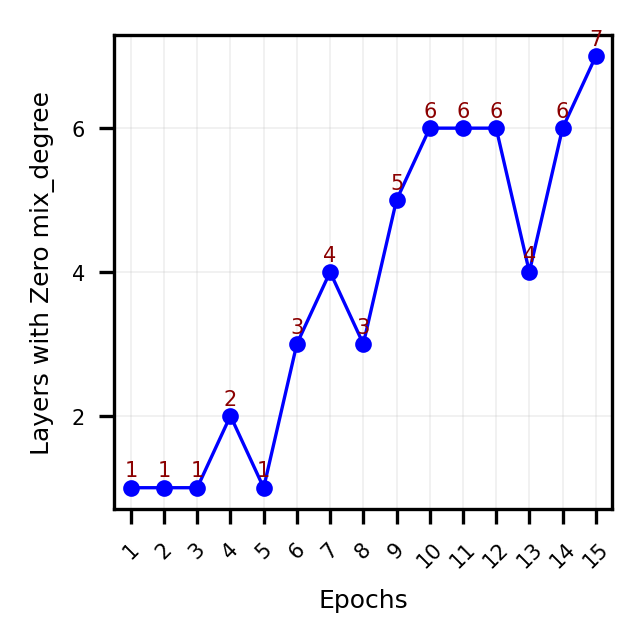}
\includegraphics[width=0.49\textwidth]{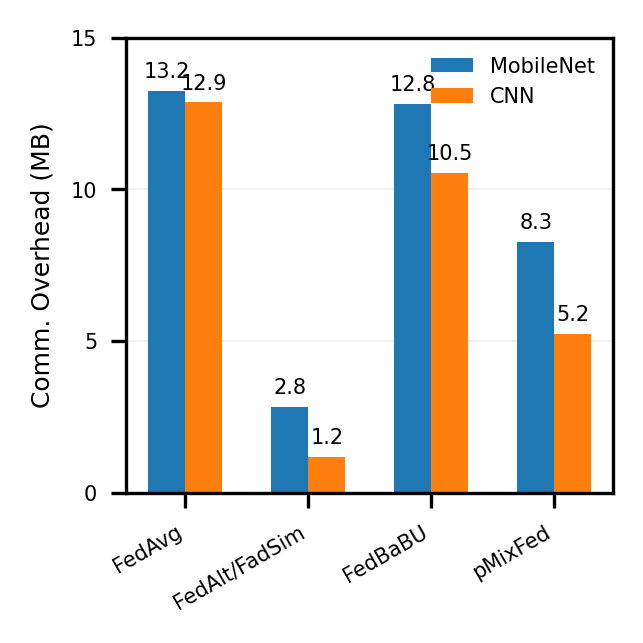}
\caption{(a) Number of frozen layers per communication round due to zero mixup degree. (b) Average communication overhead per round across 100 clients for CNN and MobileNet on CIFAR-100.}
\label{fig:different_mu_round}
\end{figure*}

\noindent Compared to full-model FL (FedAvg), \textit{pMixFed} does not increase the parameter size transmitted per layer. While its per-round communication cost is higher than partial methods (FedAlt, FedSim), two key properties compensate for this overhead: \\
\textbf{Faster convergence and early stopping.} \textit{pMixFed} reaches target accuracy in significantly fewer communication rounds, reducing the overall cost. In large-scale and highly heterogeneous settings (e.g., MobileNet on CIFAR-100), FedAlt and FedSim often diverge or require far more rounds to achieve comparable accuracy. \\
\textbf{Adaptive layer freezing.} The mixup degree $\lambda_i^{(t)}$, updated dynamically via Eq.~(5)--(6), drives progressive freezing of layers. As training advances, an increasing number of layers are excluded from both broadcasting and aggregation, further reducing transferred parameters in later rounds. 
We also evaluated computational efficiency by measuring GPU usage and runtime per round (Table~\ref{tab:cifar100_overhead}). In the most demanding case (CIFAR-100, MobileNet, $C=100\%$), \textit{pMixFed} is only slightly slower (1–7s per round) and requires marginally more GPU memory (1–22MB). Given its faster convergence (on average, $\sim$10\% fewer rounds) and improved stability, the effective overhead is substantially mitigated.

\begin{table}[ht]
  \centering
  \caption{GPU usage and computational time per global round using MobileNet on CIFAR-100 ($C=100\%$).}
  \begin{tabular}{c|cc|cc}
    \toprule
    \multirow{2}{*}{Method} 
    & \multicolumn{2}{c|}{$N=100$} & \multicolumn{2}{c}{$N=10$} \\
    \cmidrule(lr){2-3} \cmidrule(lr){4-5}
    & Time (s) & Memory (MB) & Time (s) & Memory (MB) \\
    \midrule
    FedAvg  & 94.16  & 5362.44  & 95.31  & 735.25  \\
    FedAlt  & 94.75  & 5372.77  & 92.86  & 746.85  \\
    FedSim  & 95.72  & 5378.19  & 91.43  & 746.85  \\
    \textbf{pMixFed}  & \textbf{100.03}  & \textbf{5376.90}  & \textbf{98.48}  & \textbf{762.99}  \\
    \bottomrule
  \end{tabular}
  \label{tab:cifar100_overhead}
\end{table}

\noindent \textit{In summary, while pMixFed incurs a minor per-round overhead, its faster convergence and adaptive freezing mechanism reduce the overall communication and computation cost, making it more scalable and robust in heterogeneous FL settings.}

\section{Conclusions} \label{sec:conclusion}
 
We introduced pMixFed, a dynamic, layer-wise personalized federated learning approach that uses mixup to integrate the shared global and personalized local models. Our approach features adaptive partitioning between shared and personalized layers, along with a gradual transition for personalization, enabling seamless adaptation for local clients, improved generalization across clients, and reduced risk of catastrophic forgetting. It should be noted that while Mixup has previously been used in FL (e.g., XORMixup, FEDMIX, FedMix), its role was limited to data augmentation. Similarly, model partitioning has been explored in works like FedAlt and FedSim, but only with fixed cut-off layers. In contrast, pMixFed introduces a novel, adaptive integration of layer-wise Mixup directly within the model parameter space—not the feature space. This integration is dynamically updated every communication round based on each client's local accuracy (personalization) relative to the global accuracy (generalization). Governed by a tunable parameter $\mu$, pMixFed enables gradual, client-specific personalization across layers, mitigating global-local discrepancies and client drift in a fully dynamic manner. These innovations distinguish pMixFed from prior approaches, which rely on static strategies.
We provided a theoretical analysis of pMixFed to study the properties of its convergence. Our experiments on three datasets demonstrated its superior performance over existing PFL methods. Empirically, pMixFed exhibited faster training times, increased robustness, and better handling of data heterogeneity compared to state-of-the-art PFL models. Future research directions include exploring multi-Modal personalization, exploiting other mixup variants within the parameter space and adapting pMixFed as a dynamic scheduling technique in resource-limited distributed setting.

\acks{This work was partly supported by the National Center for Transportation Cybersecurity and Resiliency (TraCR) (a U.S. Department of Transportation National University Transportation Center) headquartered at Clemson University, Clemson, South Carolina, USA. Any opinions, findings, conclusions, and recommendations expressed in this material are those of the author(s) and do not necessarily reflect the views of TraCR, and the U.S. Government assumes no liability for the contents or use thereof.}

\clearpage
\onecolumn
\begin{center}
    \textbf{\Large Appendix}
\end{center}
\label{sec:AP}

\section{More Detailed discussion on the Mix Factor}
\label{sec:adaptive_mu_formula}

This section elaborates on two key properties of \textit{pMixFed}: 1) dynamic behavior, achieved through a gradual transition in the \textit{mix degree} between layers ($\lambda_i$), and 2) Adaptive behavior, introduced via the \textit{mix factor} ($\mu$). Below, we delve into the details and formulation of each step.

\subsection{Dynamic Mixup Degree}
Among the various partitioning strategies for partial PFL discussed in the main paper and in \cite{pillutla2022federated}, one of the most widely adopted techniques is assigning higher layers of local model $L_{l,k}^t$ to personalization while allowing the base layers $L_{g,k}^t$ to be shared across clients as the global model \cite{oh2021fedbabu,sun2023fedperfix, arivazhagan2019federated}. This design aligns with insights from the Model-Agnostic Meta-Learning (MAML) algorithm \cite{collins2022maml}, which demonstrates that lower layers generally retain task-agnostic, generalized features, while the higher layers capture task-specific, personalized characteristics. Accordingly, in this work, we designate the head of the model as containing personalized information, while the base layers represent generalized information shared across clients \footnote{It should be noted that our method is fully adaptable to different partial PFL designs as well discussed in \cite{pillutla2022federated}.}. 
\subsubsection{Broadcasting:} To achieve a nuanced and gradual transition in the mixing process between the global model and the local model, we define the mix degree $\lambda_i$ as follows. The local model's head $L_n$ remains frozen, and the head of the global model is excluded from being shared with the local model. The mix degree for each layer $\lambda_i$ increases incrementally based on the \textit{mix factor $\mu$}, such that as we move toward the base layers, the personalization impact decreases. This dynamic behavior is represented as:
\[
\lambda_{i} = \lambda_{i+1} + \mu,
\]
where $\lambda_i$ controls the degree of mixing at layer $i$. This process is visually illustrated in Figure \ref{fig:layersAP}.  \begin{figure}
\centering
\captionsetup{font=footnotesize}
\includegraphics[width=0.75\textwidth]{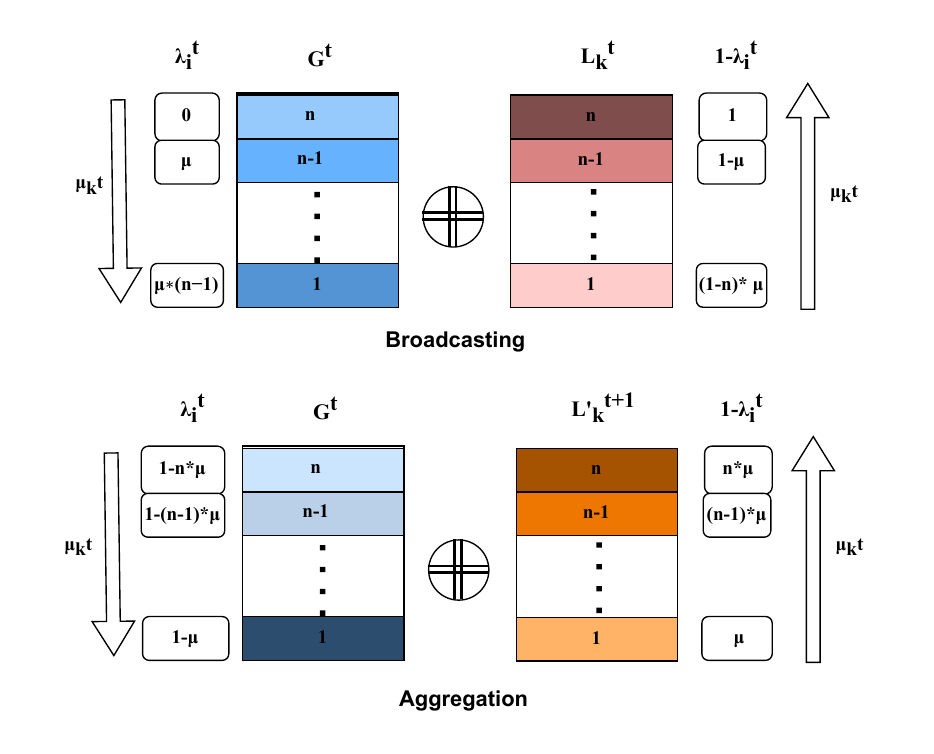}
\caption{layer-wise Dynamic transition of mixup degree $\lambda_i$ in Broadcasting and aggregation phase. The darker color shows the higher mixup degree ($\lambda_i$) for the corresponding layer $i$.}
\label{fig:layersAP}
\end{figure}

\subsubsection{Aggregation:} The aggregation stage focuses on preserving the generalized information from the history of previous global models to mitigate the catastrophic forgetting problem. In contrast to the broadcasting process, the primary goal here is to retain the generalized information of the previous global model, which encapsulates the history of all prior models $H \big|_{1}^{t-1}{G}$. Therefore, the base layers should predominantly be shared from the global model, particularly during the first round of training, where the updated local models are still underdeveloped. Nonetheless, as we transition to the head, it's less prominent to transfer knowledge from the previous global models. Figure \ref{fig:layersAP} illustrates how the mixup degree transitions from the head to the base layers.
\subsection{Adaptive Mix-Factor}
To enable online, adaptive updates of the mix degree $\lambda_i$, we implemented several algorithms, as detailed in Section \textit{5.4 Ablation Study}. We observed a clear relationship between the mixup degree and the similar behavior to lr, which is further discussed in Section \ref{sec:theory}. Inspired by learning rate schedulers, we introduced the mix-factor $\mu$ to adaptively update the layer-wise mixup degree based on the current communication round $\delta = t/T$ and the relative performance of the current local model $L_k^t$ compared to the global model $G^t$. The Sigmoid function in Figure \ref{fig:layersAP} illustrates how $\mu$ evolves with respect to $\delta$ and accuracy ($Acc$). The best results were achieved when setting $b=1$ and using the square of $Acc$ as an exponent. The rationale behind this approach is that a more experienced, better-performing model should share more information. Specifically, if the local model accuracy $Acc_l$ significantly exceeds that of the global model ($Acc_G$) in the current round ($t$) such that $Acc \gg 1$, less information is shared from the global model. Conversely, when $0 < Acc < 1$, the global model dominates the parameter updates in both the global and local models. Figure \ref{fig:mix} shows the distribution of the calculated $\mu$ across different communication rounds for client 0 in the broadcasting stage.
In the broadcasting stage, higher $Acc_l$ values result in freezing more layers for personalization, leading to a decrease in $\mu$. Conversely, during the aggregation phase, if the global model accuracy $Acc_G$ outperforms the updated local model accuracy $Acc_{l'}^{(t+1)}$, more base layers are shared. This is especially important during the first round of training, where local models are less stable. 
As demonstrated in \textit{Section \ref{analytic_experiment}, our proposed method adapts dynamically to performance drops and high-distribution complexity, adjusting the mixup degree as needed. This adaptability is effective even when applied partially to partial PFL methods such as FedAlt, and FedSim}.
 
\subsection{Model Heterogeneity}\label{sec:different_model_size}
\textit{pMixFed} is capable of handling variable model sizes across different clients. The global model, $M_g$, retains the maximum number of layers from all clients, i.e., $M_G = \max (M_1, M_2, \dots, M_N)$. During the matching process between the global model $G_i$ and the local model $L_i$, if a layer block from the local model does not match a corresponding global layer, we set $\lambda_i = 0$, meaning that the layer block will neither participate in the broadcasting nor aggregation processes. So the layers are participating according to their existing participation rate. For instance, if only 40\% of clients have more than 4 layers, the generalization degree (in both broadcasting and aggregation stages), will be less than $0.4$ in each training round due to different participation rates. \\
\subsection{Difference from other works: }
While Mixup has previously been used in FL (e.g., XORMixup, FEDMIX, FedMix), its role was limited to data augmentation. Similarly, model partitioning has been explored in works like FedAlt and FedSim, but only with fixed cut-off layers. In contrast, pMixFed introduces a novel, adaptive integration of layer-wise Mixup directly within the model parameter space—not the feature space. This integration is dynamically updated every communication round based on each client's local accuracy (personalization) relative to the global accuracy (generalization). Governed by a tunable parameter $\mu$, pMixFed enables gradual, client-specific personalization across layers, mitigating global-local discrepancies and client drift in a fully dynamic manner. These innovations distinguish pMixFed from prior approaches, which rely on static strategies.

\begin{figure}
\centering
\captionsetup{font=footnotesize}
\includegraphics[width=0.4\textwidth]{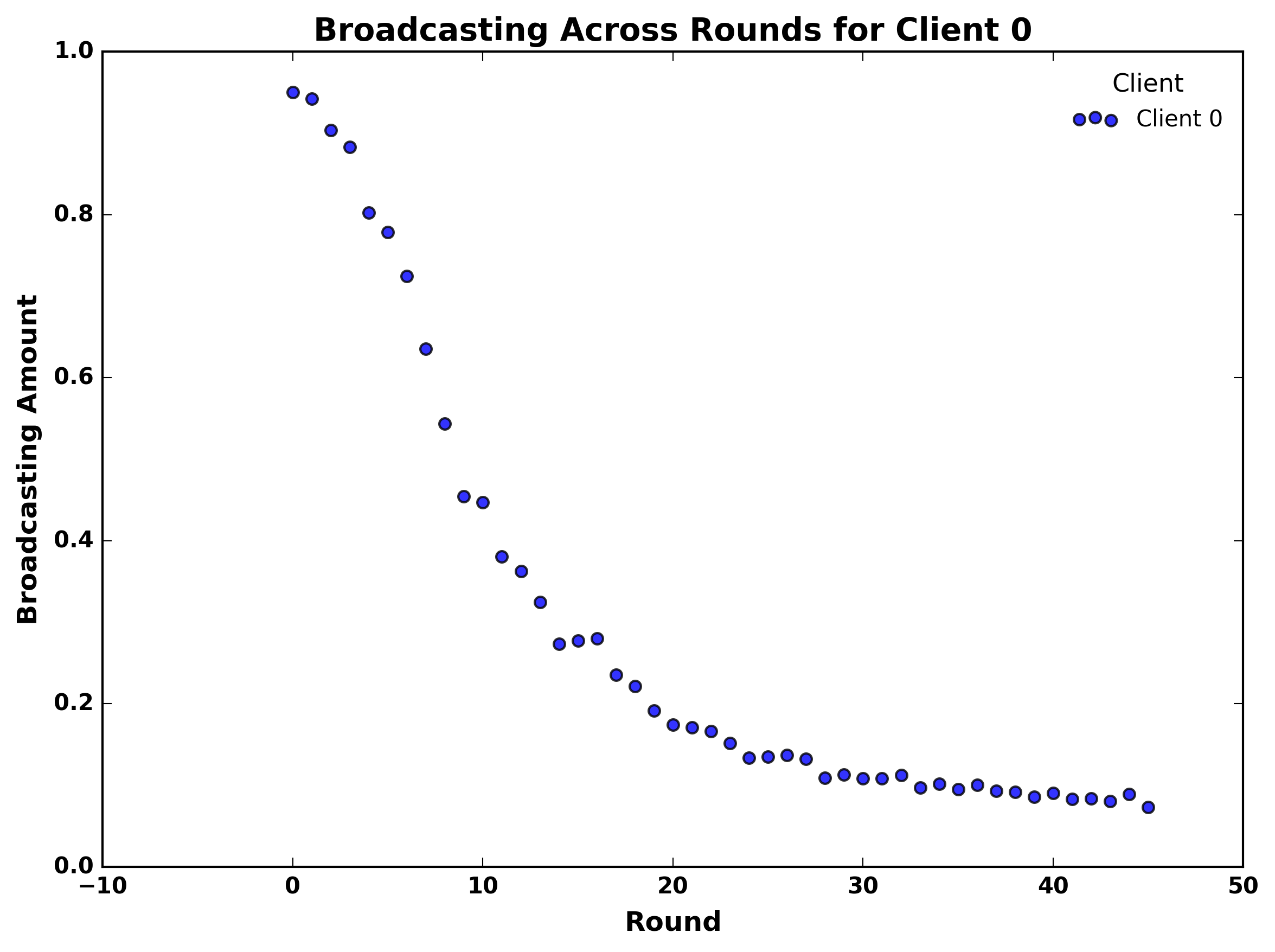}
\caption{mix-factors for client 0 in different communication rounds}
\label{fig:mix}
\end{figure}

\begin{figure}
\centering
\captionsetup{font=footnotesize}
\includegraphics[width=0.5\textwidth]{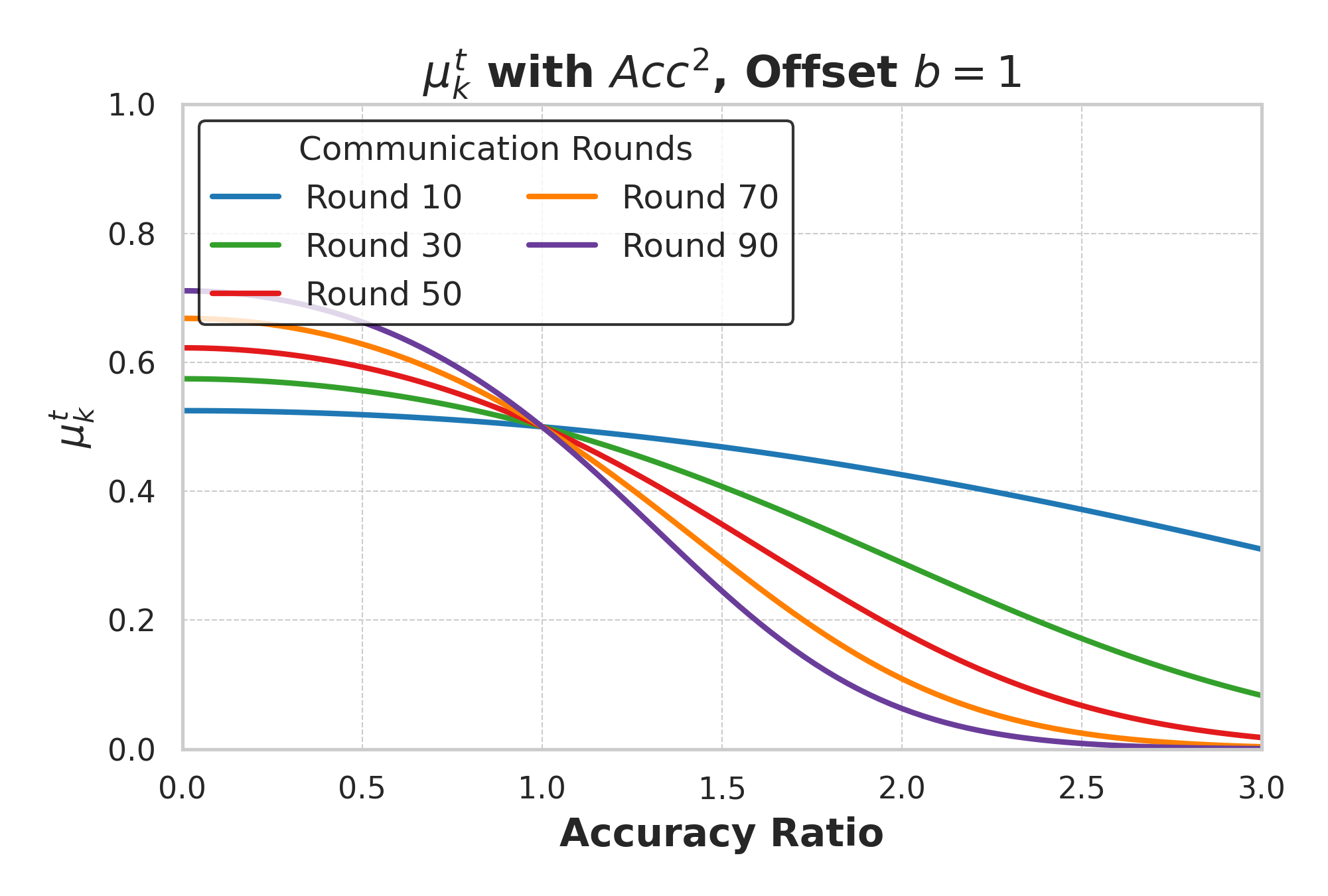}
\caption{Adaptive mix-factor $\mu_k^t$ according to the accuracy ration $Acc^t = Acc^t_L / Acc^t_G$ in different communication rounds.}
\label{fig:layersAP1}
\end{figure}

\section{Proof of Theorems:} \label{sec:proof}
\paragraph{Proof of Theorem 1 (SGD form with an effective step).}
By definition of the \textsc{pMixFed} aggregation,
\[
\theta^{t+1}
~=~ \sum_{k\in\mathcal{U}_t}\Omega_k\Big(\lambda_k^t\,\theta^t + (1-\lambda_k^t)\,\theta_k^{t+1}\Big)
~=~ \bar\lambda^{\,t}\,\theta^t + (1-\bar\lambda^{\,t})\sum_{k\in\mathcal{U}_t}\Omega_k \theta_k^{t+1},
\]
where $\bar\lambda^{\,t} := \sum_{k\in\mathcal{U}_t}\Omega_k \lambda_k^t \in [0,1]$.
Each selected client performs one local SGD step from the broadcasted model $\theta^t$:
\(
\theta_k^{t+1}=\theta^t - \eta_\ell\, g_k^t
\),
where $g_k^t$ is the stochastic gradient computed on client $k$ at $\theta^t$.
Substituting into the aggregation gives
\[
\theta^{t+1}
~=~ \bar\lambda^{\,t}\,\theta^t
~+~ (1-\bar\lambda^{\,t})\sum_{k\in\mathcal{U}_t}\Omega_k\big(\theta^t - \eta_\ell g_k^t\big)
~=~
\theta^t - (1-\bar\lambda^{\,t})\,\eta_\ell \sum_{k\in\mathcal{U}_t}\Omega_k\, g_k^t.
\]
Therefore the server update is \emph{exactly} an SGD step on $F(\theta)=\sum_k \Omega_k F_k(\theta)$ with the \emph{effective} step size
\(
\eta_{\mathrm{eff}}^{\,t} := (1-\bar\lambda^{\,t})\,\eta_\ell
\),
i.e.,
\(
\theta^{t+1}=\theta^t - \eta_{\mathrm{eff}}^{\,t}\sum_{k\in\mathcal{U}_t}\Omega_k\, g_k^t
\).

\bigskip

\paragraph{Proof of Theorem 2 (Coefficient matching with FedSGD).}
Assume the server writes its update as a FedSGD step with server stepsize $\eta_g>0$:
\(
\theta^{t+1}=\theta^t - \eta_g \sum_{k\in\mathcal{U}_t}\Omega_k\, g_k^t
\).
From Theorem~1 we also have the \textsc{pMixFed} SGD form
\(
\theta^{t+1}=\theta^t - (1-\bar\lambda^{\,t})\,\eta_\ell \sum_{k\in\mathcal{U}_t}\Omega_k\, g_k^t
\).
Equality of the two updates for arbitrary realizations of the stochastic gradients holds if and only if their coefficients on the aggregated gradient coincide, i.e.,
\[
(1-\bar\lambda^{\,t})\,\eta_\ell \;=\; \eta_g
\qquad\Longleftrightarrow\qquad
\bar\lambda^{\,t} \;=\; 1 - \frac{\eta_g}{\eta_\ell}.
\]
Since by construction $\bar\lambda^{\,t}\in[0,1]$, we immediately obtain the admissible range
\(
0 \le \eta_g/\eta_\ell \le 1
\).
This proves the necessary and sufficient condition for coefficient matching. 

\paragraph{Proof of Theorem 3 (Nonconvex rate).}
Recall the \textsc{pMixFed} update in SGD form
\[
\theta^{t+1} \;=\; \theta^t \;-\; \eta_{\mathrm{eff}}\,\underbrace{\sum_{k\in\mathcal U_t}\Omega_k\, g_k^t}_{:=\,\bar g^t},
\qquad
\eta_{\mathrm{eff}} \;\le\; \frac{1}{L},
\]
where by (A2) the aggregated stochastic gradient $\bar g^t$ is unbiased,
$\mathbb{E}[\bar g^t \mid \theta^t]=\nabla F(\theta^t)$, and has bounded second moment
$\mathbb{E}\|\bar g^t-\nabla F(\theta^t)\|^2 \le \sigma^2$.
By $L$-smoothness (A1),
\[
F(\theta^{t+1})
\;\le\;
F(\theta^t) + \big\langle \nabla F(\theta^t),\,\theta^{t+1}-\theta^t \big\rangle
+ \frac{L}{2}\,\|\theta^{t+1}-\theta^t\|^2.
\]
Substitute $\theta^{t+1}-\theta^t=-\eta_{\mathrm{eff}}\,\bar g^t$, take conditional expectation given $\theta^t$, and use $\mathbb{E}\|\bar g^t\|^2 = \|\nabla F(\theta^t)\|^2 + \mathbb{E}\|\bar g^t-\nabla F(\theta^t)\|^2 \le \|\nabla F(\theta^t)\|^2 + \sigma^2$ to obtain
\[
\mathbb{E}\!\left[F(\theta^{t+1})\,\middle|\,\theta^t\right]
\;\le\;
F(\theta^t)
\;-\;\eta_{\mathrm{eff}}\,\|\nabla F(\theta^t)\|^2
\;+\;\frac{L\,\eta_{\mathrm{eff}}^2}{2}\Big(\|\nabla F(\theta^t)\|^2+\sigma^2\Big).
\]
Rearranging,
\[
\mathbb{E}\!\left[F(\theta^{t+1})\,\middle|\,\theta^t\right]
\;\le\;
F(\theta^t)
\;-\;\Big(\eta_{\mathrm{eff}}-\tfrac{L\eta_{\mathrm{eff}}^2}{2}\Big)\,\|\nabla F(\theta^t)\|^2
\;+\;\frac{L\,\eta_{\mathrm{eff}}^2}{2}\,\sigma^2.
\]
Since $\eta_{\mathrm{eff}}\le 1/L$, we have $\eta_{\mathrm{eff}}-\tfrac{L\eta_{\mathrm{eff}}^2}{2}\ge \tfrac{\eta_{\mathrm{eff}}}{2}$; taking total expectation yields the one-step descent inequality
\[
\mathbb{E}\big[F(\theta^{t+1})\big]
\;\le\;
\mathbb{E}\big[F(\theta^{t})\big]
\;-\;\frac{\eta_{\mathrm{eff}}}{2}\,\mathbb{E}\big\|\nabla F(\theta^t)\big\|^2
\;+\;\frac{L\,\eta_{\mathrm{eff}}^2}{2}\,\sigma^2.
\tag{$\star$}
\]
Summing $(\star)$ over $t=0,\dots,T-1$ and telescoping gives
\[
\frac{\eta_{\mathrm{eff}}}{2}\sum_{t=0}^{T-1}\mathbb{E}\big\|\nabla F(\theta^t)\big\|^2
\;\le\;
F(\theta^0)-F^\star \;+\; \frac{L\,\eta_{\mathrm{eff}}^2}{2}\,T\,\sigma^2.
\]
Divide both sides by $T\,\eta_{\mathrm{eff}}/2$ to conclude
\[
\frac{1}{T}\sum_{t=0}^{T-1}\mathbb{E}\big\|\nabla F(\theta^t)\big\|^2
\;\le\;
\frac{2\,(F(\theta^0)-F^\star)}{T\,\eta_{\mathrm{eff}}}
\;+\; L\,\eta_{\mathrm{eff}}\,\sigma^2,
\]
which is precisely the claimed nonconvex rate. 
\bigskip

\paragraph{Proof of Theorem 4 (Strongly convex case).}
Assume (A1)–(A3) and $\eta_{\mathrm{eff}}\le \min\{1/L,\,1/(2\mu)\}$.
Starting again from the smoothness inequality and the SGD update,
\[
\mathbb{E}\big[F(\theta^{t+1})\,\big|\,\theta^t\big]
\;\le\;
F(\theta^t)
\;-\;\eta_{\mathrm{eff}}\,\|\nabla F(\theta^t)\|^2
\;+\;\frac{L\,\eta_{\mathrm{eff}}^2}{2}\Big(\|\nabla F(\theta^t)\|^2+\sigma^2\Big).
\]
As before, with $\eta_{\mathrm{eff}}\le 1/L$ we get
\[
\mathbb{E}\big[F(\theta^{t+1})\,\big|\,\theta^t\big]
\;\le\;
F(\theta^t)
\;-\;\frac{\eta_{\mathrm{eff}}}{2}\,\|\nabla F(\theta^t)\|^2
\;+\;\frac{L\,\eta_{\mathrm{eff}}^2}{2}\,\sigma^2.
\tag{$\dagger$}
\]
Strong convexity implies the Polyak–Łojasiewicz (PL) inequality
$\|\nabla F(\theta^t)\|^2 \ge 2\mu\big(F(\theta^t)-F^\star\big)$.
Plugging this into $(\dagger)$ and taking total expectation gives
\[
\mathbb{E}\big[F(\theta^{t+1})-F^\star\big]
\;\le\;
\big(1-\mu\,\eta_{\mathrm{eff}}\big)\,
\mathbb{E}\big[F(\theta^{t})-F^\star\big]
\;+\;\frac{L\,\eta_{\mathrm{eff}}^2}{2}\,\sigma^2.
\]
Unrolling the linear recursion,
\[
\mathbb{E}\big[F(\theta^{t})-F^\star\big]
\;\le\;
(1-\mu\,\eta_{\mathrm{eff}})^t\big(F(\theta^{0})-F^\star\big)
\;+\;
\frac{L\,\eta_{\mathrm{eff}}\,\sigma^2}{2\mu}.
\]
Finally, strong convexity also yields
$F(\theta)-F^\star \ge \tfrac{\mu}{2}\|\theta-\theta^\star\|^2$,
so
\[
\mathbb{E}\,\|\theta^{t}-\theta^\star\|^2
\;\le\;
\frac{2}{\mu}\,\mathbb{E}\big[F(\theta^{t})-F^\star\big]
\;\le\;
(1-\mu\,\eta_{\mathrm{eff}})^t\,\|\theta^{0}-\theta^\star\|^2
\;+\;
\mathcal{O}\!\Big(\tfrac{\sigma^2}{\mu}\,\eta_{\mathrm{eff}}\Big),
\]
which is the claimed result.

\section{Additional Details about the Experiments}
\subsection{Experimental Setup} \label{sec:appendix-setup}
For creating heterogenity, we followed the  
\textbf{Training Details:} For evaluation, We have reported the average test accuracy of the global model \cite{yuan2021we} for different approaches. The final global model at the last communication round is saved and used during the evaluation. The global model is then personalized according to each baseline's personalization or fine-tuning algorithm for $r=4$ local epochs and $T=50$. For \textbf{FedAlt}, the local model is reconstructed from the global model and fine-tuned on the test data. For \textbf{FedSim}, both the global and local models are fine-tuned partially but simultaneously. In the case of \textbf{FedBABU}, the head (fully connected layers) remains frozen during local training, while the body is updated. Since we could not directly apply \textbf{pFedHN} in our platform setting, we adapted their method using the same hyper parameters discussed above and employed hidden layers with 100 units for the hypernetwork and 16 kernels. The local update process for \textbf{LG-FedAvg}, \textbf{FedAvg}, and \textbf{Per-FedAvg} simply involves updating all layers jointly during the fine-tuning process. The global learning rate for the methods that need sgd update in the global server e.g., FedAvg, has been set from $lr_{global} = [1e-3 , 1e-4 , and 1e-5 ]$.It should be noted that due to the performance drop for some methods (FedAlt , FedSim) in round 10 or 40 in some settings, we've reported the highest accuracy achieved. Also this is the reason the accuracy curves are illustrated for 39 rounds instead of 50. \footnote{As discussed in the main paper, the change of hyper-parameters such as lr, batch size, momentum and even changing the optimizer to adam didn't help with the performance drop in most cases.}\\

\subsection{Caltech-101 Experimental Setup and Results}
\label{app:caltech}

\paragraph{Dataset.} 
Caltech-101 \cite{fei2004learning} contains 9,146 images across 101 object categories plus a background class, with $\sim$40–800 images per class (most near 50). 
Images are resized to $224 \times 224$ for CNN training. 
We followed prior FL literature \cite{fallah2020personalized,yang2023fedrep,pillutla2022federated} 
in simulating heterogeneous splits: each client receives at most $S=30$ classes sampled by Dirichlet distribution ($\alpha=0.5$). 
We evaluate four federated settings: 
$N=\{10,100\}$ clients with participation $C=\{10\%,100\%\}$.

\paragraph{Model and Training.} 
For consistency with CIFAR/MNIST, we adopt a CNN backbone (4 convolutional layers + 1 fully connected layer). 
Each client trains for $r=4$ local epochs per round with batch size 32. 
The total communication rounds are $T=50$, 
using Adam optimizer with learning rate $1\times 10^{-3}$. 
For partial baselines (FedAlt), the split layer is fixed at the midpoint. 
For Per-FedAvg, meta-initialization is updated with inner learning rate $0.01$ and outer learning rate $0.001$. 
FedRep uses a shared representation with personalized heads (dimension 256). 
Evaluation is done by fine-tuning on local test splits for 4 epochs.

\paragraph{Results.} 
Table~\ref{tab:caltech_full} reports detailed top-1 accuracy (\%) across four evaluation angles (A–D), 
along with the average. 
Across all federated settings, pMixFed achieves the best performance, 
with particularly strong improvements in the heterogeneous regime ($N=100$, $C=10\%$). 
FedRep performs second-best overall, confirming the utility of representation sharing, 
while Per-FedAvg is strong when client counts are small. 
FedAlt trails consistently, highlighting the limitations of fixed partitioning.

\subsection{Results on Caltech-101 dataset}
\label{app:caltech}

We additionally evaluate on Caltech-101 \cite{fei2004learning}, following the same heterogeneous FL setup as CIFAR/MNIST. 
Each client receives at most $S=30$ classes, and data is split using a Dirichlet distribution ($\alpha=0.5$). 
We report accuracy across four evaluation angles (A–D) and their mean (AVG). 

\begin{table*}[ht]
\centering
\caption{Performance (\%) on Caltech-101 (CNN backbone). Rows A–D denote evaluation angles; AVG is the mean across A–D.}
\label{tab:caltech_full}
\setlength{\tabcolsep}{7pt}
\renewcommand{\arraystretch}{1.3}
\begin{tabular}{lccccc}
\toprule
\multicolumn{6}{c}{\textbf{$N=100,~C=100\%$}} \\
\textbf{Method} & A & B & C & D & AVG \\
\midrule
pMixFed (Ours) & 78.4 & 80.2 & 79.1 & 79.5 & \textbf{79.3} \\
FedRep         & 75.8 & 78.5 & 77.2 & 76.6 & 77.0 \\
Per-FedAvg     & 75.1 & 77.2 & 76.0 & 77.5 & 76.5 \\
FedAlt         & 70.8 & 74.1 & 73.5 & 73.9 & 73.1 \\
\midrule
\multicolumn{6}{c}{\textbf{$N=10,~C=100\%$}} \\
\textbf{Method} & A & B & C & D & AVG \\
\midrule
pMixFed (Ours) & 80.9 & 83.8 & 82.9 & 81.7 & \textbf{82.4} \\
FedRep         & 78.6 & 80.9 & 80.4 & 80.5 & 80.1 \\
Per-FedAvg     & 79.0 & 81.8 & 82.1 & 81.0 & 81.0 \\
FedAlt         & 74.0 & 76.8 & 75.2 & 76.0 & 75.5 \\
\midrule
\multicolumn{6}{c}{\textbf{$N=100,~C=10\%$}} \\
\textbf{Method} & A & B & C & D & AVG \\
\midrule
pMixFed (Ours) & 74.6 & 77.8 & 76.9 & 74.9 & \textbf{76.1} \\
FedRep         & 72.9 & 74.6 & 74.5 & 74.8 & 74.2 \\
Per-FedAvg     & 72.1 & 74.0 & 73.8 & 74.1 & 73.5 \\
FedAlt         & 67.4 & 70.8 & 69.9 & 69.8 & 69.5 \\
\midrule
\multicolumn{6}{c}{\textbf{$N=10,~C=10\%$}} \\
\textbf{Method} & A & B & C & D & AVG \\
\midrule
pMixFed (Ours) & 77.2 & 80.1 & 79.8 & 79.0 & \textbf{79.0} \\
FedRep         & 75.1 & 77.3 & 77.2 & 77.8 & 76.9 \\
Per-FedAvg     & 74.2 & 76.5 & 76.0 & 77.9 & 76.2 \\
FedAlt         & 70.5 & 72.6 & 72.9 & 72.8 & 72.2 \\
\bottomrule
\end{tabular}
\end{table*}

\subsection{Out-of-Sample and Participation Gap} \label{sec:participation-gap}
In the evaluation of the effect of learning rate and mixup, the average test accuracy\footnote{Classification accuracy using softmax} is measured on cold-start clients $|D^{ts}_{k \cap \text{unseen}}|, k \subset \{1, \dots, M\}$, where $D^{ts} \neq D^{tr}$. These clients have not participated in the federation at any point during training. \textit{FedAlt} and \textit{FedSim} perform poorly on cold-start users or unseen clients, highlighting their limited generalization capability. The test accuracy of \textit{pFedMix}, while affected under a 10\% participation rate, benefits significantly from seeing more clients. Increased client participation directly improves accuracy, as observed in previous studies \cite{pillutla2022federated}.

\subsection{Ablation Study : The Effects of Different alpha on Mixup Degree} \label{mixup_alpha_effect}
The Beta distribution \( \beta(\alpha, \alpha) \) is defined on the interval [0,1], where the parameter \( \alpha \) controls the shape of the distribution. The value of \( \lambda \), used in Eq. \ref{Mixup}, is naturally sampled from this distribution. By varying the parameter \( \alpha \), we can adjust how much mixing occurs between the global model \( G \) and the local model \( L_k \).

\begin{itemize}
    \item \textbf{Uniform Distribution:} When \( \alpha = 1 \), the Beta distribution becomes uniform over [0,1]. In this case, \( \lambda \) is sampled uniformly across the entire interval, meaning that each model, \( G \) and \( L_k \), has an equal probability of being weighted more or less in the mixup process. This leads to a broad exploration of different combinations of global and local models, allowing for a wide range of mixed models.
    
    \item \textbf{Concentrated Mixup ( \( \alpha > 1 \)):} When \( \alpha > 1 \), the Beta distribution is concentrated around the center of the interval [0,1]. As a result, the mixup factor \( \lambda \) is more likely to be closer to 0.5, leading to more balanced combinations of the global and local models. This results in outputs that are more "mixed," with neither model dominating the mixup process. Such a setup can enhance the robustness of the combined model, as it prevents extreme weighting of either model, creating smoother interpolations between them.

    \item \textbf{Extremal Mixup ( \( \alpha < 1 \)):} In contrast, when \( \alpha < 1 \), the Beta distribution becomes U-shaped, with more probability mass near 0 and 1. This means that \( \lambda \) tends to be either very close to 0 or very close to 1, favoring one model over the other in the mixup process. When \( \lambda \approx 0 \), the local model \( L_k \) is chosen almost exclusively, and when \( \lambda \approx 1 \), the global model \( G \) is predominantly selected. This form of mixup creates a more deterministic selection between global and local models, with less mixing occurring.
\end{itemize}

\begin{figure}
\centering
\captionsetup{font=footnotesize}

\includegraphics[width=0.6\textwidth]{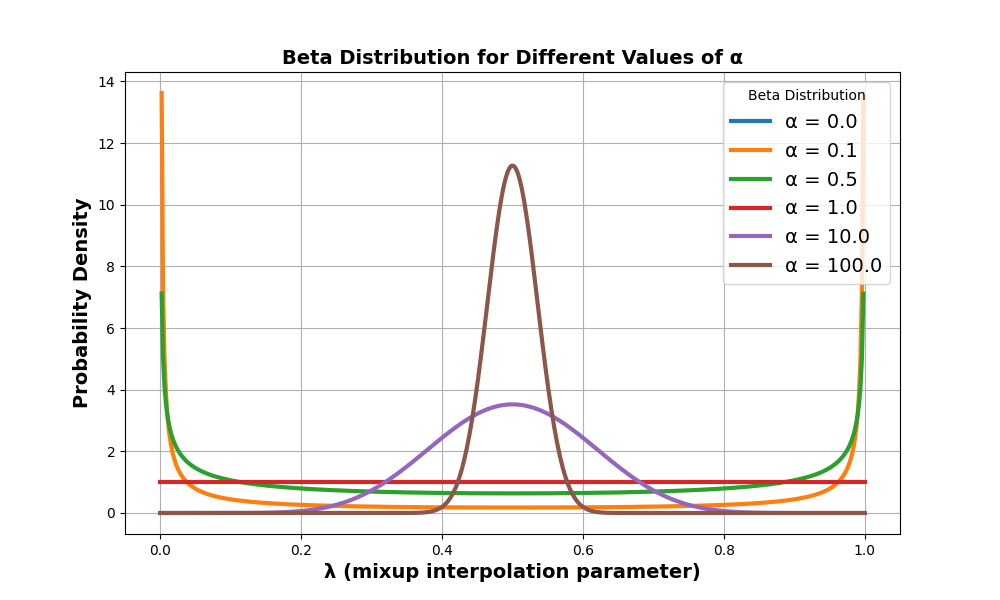}
\caption{  PDF $\lambda$ (mixup degree) for different values of $\alpha$ in $\beta$ distribution}
\label{fig:beta}
\end{figure}
\noindent The behavior of different \( \alpha \) values is depicted in Figure \ref{fig:beta}, where the distribution of the mixup factor \( \lambda \) is visualized. These distributions highlight the varying degrees of mixup, ranging from uniform blending to nearly deterministic model selection. 
To thoroughly investigate the impact of \( \alpha \) on model performance, we designed two distinct experimental setups:
\begin{itemize}
    \item \textbf{Random Sampling:} In this scenario, we set different \( \alpha \), meaning that the \( \lambda \) values are sampled uniformly from the interval [0,1] according. This ensures a wide range of mixup combinations between the global and local models. The random sampling approach helps us assess the general robustness of the model when the mixup degree \( \lambda \) is not biased towards any specific value. Table \ref{tab:pmixfed_alpha_accuracy} shows the effect of different $\alpha$ on the overall test accuracy of \textit{pMixFed}.
    
    \item \textbf{Adaptive Sampling:} For this case, we divided the communication rounds into three distinct stages, each consisting of \( \frac{epoch_{global}}{3} \) epochs. During these stages, the parameter \( \alpha \) is adaptively changed as follows:
    \begin{equation}
    \alpha = 
    \begin{cases} 
    0.1, & \text{initial stage (early training)} \\
    100, & \text{middle stage (convergence phase)} \\
    10, & \text{final stage (fine-tuning)}
    \end{cases}
    \end{equation}
    This adaptive strategy mimics the behavior of the original pFedMix algorithm while also allowing for more controlled exploration of different mixup combinations. During the early and late stages, a small \( \alpha \) (0.1) encourages more deterministic model selections (i.e., either local or global), while the middle stage with \( \alpha = 100 \) promotes more balanced mixing. This dynamic adjustment of \( \alpha \) enables us to control the degree of mixup at different phases of training. Table \ref{tab:pmixfed_alpha_sampling_accuracy} also shows the effect of different sampling approaches (random and adaptive) on the overall test accuracy of all three dataset \textit{pMixFed}. The results shows that the adaptive  sampling which creats a form scheduling for mixup degree shows promising result even compared to the original algorithm using adaptive $\mu$.
\end{itemize}

\begin{table}[ht]
\centering
\caption{Accuracy of $pMixFed$ with Different $\alpha$ Values in the Beta Distribution}
\label{tab:pmixfed_alpha_accuracy}
\begin{tabular}{c|c|c|c|c|c}
\hline
\textbf{Dataset} &  \textbf{$\alpha = 0.1$} & \textbf{$\alpha = 0.5$} & \textbf{$\alpha = 1$} & \textbf{$\alpha = 2$} & \textbf{$\alpha = 5$} \\ \hline
CIFAR-10          & 78.5                    & 81.2                    & 82.6                  & 84.3                  & 83.1                  \\ \hline
CIFAR-100         & 42.3                    & 45.6                    & 47.2                  & 49.8                  & 48.1                             \\ \hline
\end{tabular}
\end{table}

\begin{table}[ht]
\centering
\caption{Accuracy of $pMixFed$ with Different $\alpha$ Values and Sampling Strategies}
\label{tab:pmixfed_alpha_sampling_accuracy}
\begin{tabular}{|c|c|c|}
\hline
\textbf{Dataset} & \textbf{Sampling Strategy} & \textbf{Accuracy(\%)}  \\ \hline
CIFAR-10         & Random Sampling ($\alpha = 1$) & 82.6                                  \\ \hline
CIFAR-10         & Adaptive Sampling            & 85.3                           \\ \hline
CIFAR-100        & Random Sampling ($\alpha = 1$) & 47.2                                 \\ \hline
CIFAR-100        & Adaptive Sampling            & 50.1                              \\ \hline
MNIST            & Random Sampling ($\alpha = 1$) & 97.9                                        \\ \hline
MNIST            & Adaptive Sampling            & 98.6                                           \\ \hline
\end{tabular}
\end{table}

\vskip 0.2in
\bibliography{sample}

\end{document}